\newtheorem{theorem}{Theorem}
\newtheorem{definition}{Definition}
\DeclareMathOperator*{\argmin}{arg\,min}
\def\ergMetr{\mathcal{E}}
\def\Re{\mathbb{R}}
\def\expSpace{\mathcal{W}}
\title{Time Optimal Ergodic Search}
\author{\IEEEauthorblockN{Dayi Dong, Henry Berger, and Ian Abraham}
\IEEEauthorblockA{Yale University, New Haven, CT, USA  
\\ Email: \{ethan.dong, henry.berger, ian.abraham\}@yale.edu }
}
\begin{document}

\maketitle

\begin{abstract}
    Robots with the ability to balance time against the thoroughness of search have the potential to provide time-critical assistance in applications such as search and rescue. 
    Current advances in ergodic coverage-based search methods have enabled robots to completely explore and search an area in a fixed amount of time. 
    However, optimizing time against the quality of autonomous ergodic search has yet to be demonstrated.
    In this paper, we investigate solutions to the time-optimal ergodic search problem for fast and adaptive robotic search and exploration. 
    We pose the problem as a minimum time problem with an ergodic inequality constraint whose upper bound regulates and balances the granularity of search against time. 
    Solutions to the problem are presented analytically using Pontryagin's conditions of optimality and demonstrated numerically through a direct transcription optimization approach. 
    We show the efficacy of the approach in generating time-optimal ergodic search trajectories in simulation and with drone experiments in a cluttered environment. 
    Obstacle avoidance is shown to be readily integrated into our formulation, and we perform ablation studies that investigate parameter dependence on optimized time and trajectory sensitivity for search. 
\end{abstract}

\IEEEpeerreviewmaketitle

\section{Introduction}

    The ability for robots to effectively balance time against the thoroughness of search in strict time conditions is vital for providing timely assistance in many search and rescue applications~\cite{adams2007search, mayer2019drones}.
    For example, it is often desired to have robots quickly survey large areas in minimal time and then execute a refined search based on any information gathered. 
    This approach can better assist rescue personnel in providing immediate assistance as needed.
    While recent algorithmic advances have made it possible to generate robot trajectories that provide effective coverage of search areas~\cite{prabhakar2020ergodic, Chen-RSS-22, tranzatto2022cerberus}, few consider the explicit dependence on time in the problem. 
    What makes the problem of reasoning about time versus coverage difficult is in the inherent duality between time spent covering an area and the thoroughness of the coverage. 
    Therefore, in this work, we are interested in addressing the question: is it possible to balance time and coverage quality in a single optimization problem?
    \begin{figure}
        \centering
        \includegraphics[width=\linewidth]{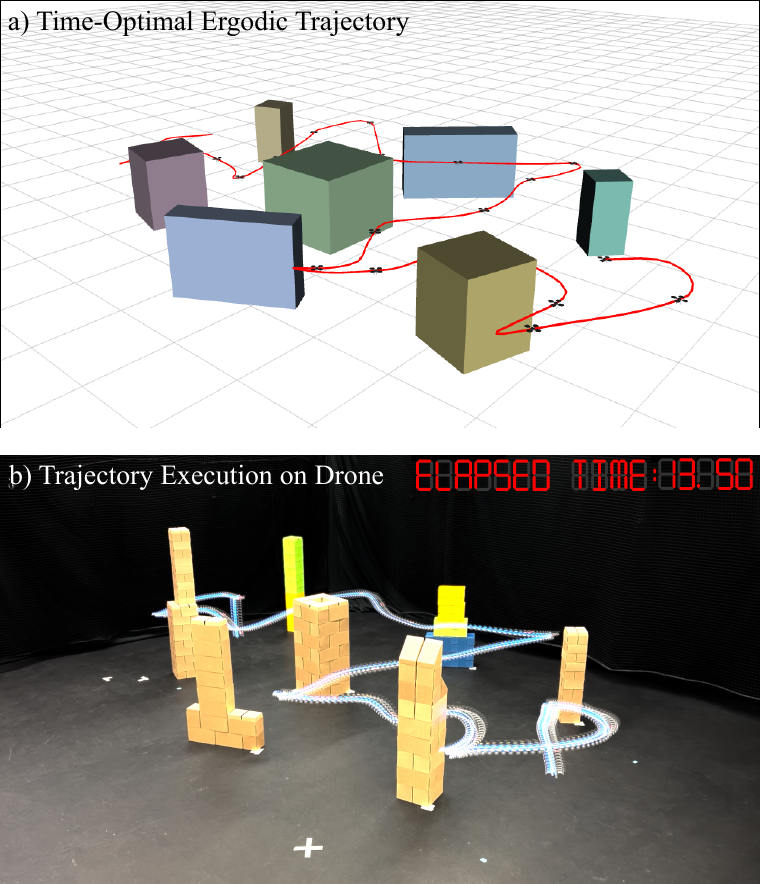}
        \caption{\textbf{Example Time-Optimal Ergodic Search Trajectories.} The proposed work investigates solutions to the time-optimal ergodic search problem for generating time-optimal coverage trajectories for search and exploration. a) Planned time-optimal trajectory for coverage in a cluttered environment in optimal time. b) Experimental drone trajectory execution of time-optimal ergodic trajectory through the cluttered environment. Trajectory was optimized to uniformly explore the environment in $13.5$s. Multimedia demonstration provided in \url{https://sites.google.com/view/time-optimal-ergodic-search} and code \url{https://github.com/ialab-yale/time_optimal_ergodic_search} .}
        \label{fig:1}
    \end{figure}

    Autonomous search and exploration has largely been studied from the perspective of coverage-based methods~\cite{galceran2013survey, dai2018quality, zelinsky1993planning, fazli2010complete}.
    These problems optimize a path that a robot will follow that visits a discretized set of nodes (or way-points) defined over a work-space (i.e., search area).
    Similar problems exist in continuous spaces and are solved through some form of spatial approximation~\cite{siligardi2019robust, pratissoli2022coverage} or coverage based on sensor envelop~\cite{breitenmoser2010voronoi, 9982287} with the use of multiple robots. 
    However, few works include time considerations, i.e. how long a robot spends in an area and how quickly the robot navigates and explores a space. 
    Methods that do consider time will often do so in bi-level optimization or as hybrid approaches that still require some form of node-based discretization~\cite{nenchev2013towards, klesh2008real}. 
    In contrast, recent advances in ergodic coverage-based search methods have demonstrated it is possible to consider time more explicitly in autonomous coverage problems~\cite{mathew2011metrics, miller2013trajectory, coffin2022multi, abraham2021ergodic, lerch2022safety, abraham2018decentralized}. 

    Ergodic search methods optimize continuous robot search trajectories by minimizing the distance between how long a robot spends in a given region and a measure of information distributed in the region~\cite{mathew2011metrics, miller2015ergodic}.
    This distance is measured using the ergodic metric~\cite{mathew2011metrics, scott2009capturing} which can be directly optimized against robot trajectories and arbitrary measures of ``information''. 
    As a result, ergodic trajectories spend more time in high-information areas while quickly exploring in low-information regions given enough time~\cite{mathew2011metrics, patel2021multi, scott2009capturing, abraham2021ergodic, Abraham-RSS-18}. 
    However, these methods optimize trajectories over a fixed time horizon, resulting in a lack of control over the granularity of how a robot searches an area. 
    Therefore, in this paper, we pose and investigate solutions to the time-optimal ergodic search problem for generating time-optimal robotic search trajectories that sufficiently explores an area.

    This paper proposes a trajectory optimization routine for scenarios where robots need to generate dynamic trajectories that optimize continuous coverage in minimum time. 
    Our approach is to formulate this problem as a time-optimal ergodic search problem where the ergodic metric imposes a coverage constraint.
    Satisfying the ergodic metric constraint is shown to yield sufficient coverage requirements based on an upper bound value that can be afforded by the robot~\cite{scott2009capturing}. 
    Because the metric is defined over Fourier spectral modes, a constraint permits optimizing time against trajectories that provide varying levels of continuous coverage over a space. 
    We investigate computing trajectory solutions to the proposed time-optimal ergodic search problem by 1) analytically demonstrating the existence of conditions of optimality based on Pontryagin's maximum principle~\cite{kopp1962pontryagin}, and 2) numerically using a direct transcription-based approach~\cite{foehn2021time, dal2019comparison, lerch2022safety}.
    Furthermore, we demonstrate time-optimal trajectories in simulation and in drone experiments in cluttered environments through the integration of safety-based obstacle avoidance constraints~\cite{ames2019control, agrawal2017discrete, lerch2022safety}. 
    In summary, our contributions are as follows: 
    \begin{enumerate}
        \item A novel time-optimal ergodic trajectory optimization method for producing time-optimal coverage trajectories for autonomous search and exploration; 
        \item Proof of analytical conditions of optimality for the time-optimal ergodic search problem; and 
        \item Demonstration of time-optimal search trajectories on a drone system in a cluttered environment (see Fig.~\ref{fig:1}). 
    \end{enumerate}

    The paper is structured as follows: Section~\ref{sec:related} overviews related work. Section~\ref{sec:prelims} describes preliminary information on ergodic search and time-optimal control. Section~\ref{sec:time_opt_erg} poses the time-optimal ergodic search problem and presents solutions to the problem. Section~\ref{sec:results} then presents various simulated and experimental results for the proposed solution to generate time-optimal ergodic search trajectories. Last, Section~\ref{sec:conclusion} provides conclusions and an outlook on future work.

\section{Related Work} \label{sec:related}

    \subsection{Coverage-Based and Ergodic Search Methods}
        
        Prior work on coverage-based planning for search and exploration has largely been focused on specifying paths or assigning robots to locations that maximizes sensor coverage in a bounded space~\cite{galceran2013survey,choset2001coverage}.
        These solutions provide guaranteed coverage over a grid and provide robot paths using algorithms such as lawn mower algorithms~\cite{araujo2013multiple,dai2018quality, bahnemann2021revisiting, cabreira2019survey} and traveling sales-person problems~\cite{ulrich1997autonomous, applegate2011traveling, bahnemann2021revisiting}. 
        Recent extensions have moved away from the limited grid approximations and worked on continuous work-spaces using cellular decomposition or continuous potential-field methods~\cite{howard2002mobile, breitenmoser2010voronoi, kim2006local}.
        In addition, information-based extensions to these search methods have provided effective strategies for robots exploring unstructured areas~\cite{Chen-RSS-22, paull2012sensor, li2020high}. 
        However, as search-based methods moved towards continuous spaces, coverage guarantees become more difficult to obtain, especially under the presence of distributed information. 

        Novel work on ergodic search methods has emerged to compute continuous coverage trajectories of an area given enough time~\cite{mathew2011metrics}. 
        Ergodic search methods optimize robot trajectories against some underlying distributed information over an area which the robot can explore~\cite{patel2021multi, miller2015ergodic, coffin2022multi, prabhakar2020ergodic}. 
        The success of ergodic search methods compared to prior methods is attributed to the unique ergodic metric used in the trajectory optimization. 
        The ergodic metric quantifies the effectiveness of a trajectory in exploring a region based on the time a robot spends in an area. 
        A trajectory is ergodic (i.e., optimizes the ergodic metric) if the time spent along the trajectory in each region is proportional to the measure of information distributed in that region.
        As a result, optimized ergodic trajectories are significantly more robust to external sensor disturbances~\cite{miller2015ergodic} and have been shown to be an optimal strategy for information-gathering tasks~\cite{silverman2013optimal, dressel_optimality_2018} with real-world application~\cite{prabhakar2020ergodic}.
        However, prior work typically has planning horizons that are fixed and are not considered part of the optimization. 
        In this work, we investigate the time-optimal extension of the ergodic search problem and its solutions.

    \subsection{Time Optimal Planning and Control}
    
        Time-optimality in planning and control is a well-studied problem going back to the original statement of Pontryagin's maximum principle~\cite{kopp1962pontryagin}. 
        The canonical problem minimizes time subject to continuous-time system dynamics and constraints on the state of the system. 
        For select systems, the conditions of optimality generate a closed-form control solution to reach desired states in optimal time~\cite{lasalle2016time, kopp1962pontryagin}. 
        Recent research on time-optimal planning has since extended the work for fast drone flight at the level of human drone pilots~\cite{foehn2021time, romero2022time}. 
        These methods solve time-optimal trajectories over specified control ``knot'' points~\cite{foehn2021time} given a trajectory tracking cost. 
        In this work, we use a variation of the solution in~\cite{foehn2021time} to directly compute ergodic trajectory solutions from our time-optimal ergodic search problem formulation.

        With respect to search and exploration, past work on time-optimal search has typically used various ``hybrid'' formulations to solve the problem of trajectory generation~\cite{reynolds2001hybrid}.
        These approaches restrict robot trajectories on discrete node-based structures and then optimize for time along each node, which takes the form of Shortest Watchman Tour problems~\cite{chin1986optimum}, Art Gallery problems \cite{o1987art}, and Traveling Sales-person problems~\cite{applegate2011traveling}. 
        Other time-optimal trajectory planning methods have used a two-step optimization approach that first generates a motion path \cite{tian2004effective,wang2020robot} and then refines the time of the found path \cite{baghli2017optimization,kim2015trajectory,du2022time}. However, these methods do not consider search in the problem, nor do they consider the coupling between continuous trajectory planning, time, and the physical robot's dynamic constraints.

\section{Preliminaries} \label{sec:prelims}

    In this section, we provide an overview of ergodic search methods and outline the necessary nomenclature used throughout this paper. The canonical ergodic trajectory optimization problem is formulated, and then we briefly define the time-optimal control problem statement for reference. 
    
    \subsection{Ergodic Search}

        Let us first define a robot trajectory at time $t$ with state $x(t) : \Re^+ \rightarrow \mathcal{X} \subseteq \mathbb{R}^n$ and control input $u(t) : \Re^+ \rightarrow \mathcal{U} \subseteq \Re^m$ where $\mathcal{X}, \mathcal{U}$ are the state and control spaces of dimensionality $n$ and $m$ respectively. 
        Next, define $\dot{x} = f(x(t), u(t))$ where $f(x,u) : \mathcal{X} \times \mathcal{U} \to \mathcal{T}_\mathcal{X}$ is the continuous-time (potentially nonlinear) dynamics of the robot.
        In addition, let us define a map $g(x) : \mathcal{X} \to \expSpace$ such that $\expSpace= [0,L_0]\times \ldots \times [0, L_{v-1}]$ where $v\leq n$, and $L_i$ are the bounds of the workspace $\expSpace$ which we denote as the exploration space.\footnote{For example, $g(x) = \mathbf{I}_p x$ where $\mathbf{I}_p$ is a selection matrix with all zeros except for the parts of the state $x$ that correspond to an exploration space in the subset of the robot's global position in the world.  } 
        The map $g$ then takes us from state space $\mathcal{X}$ to exploration space $\expSpace$.

        A trajectory $x(t), \forall t \in [t_0, t_f]$ is \emph{ergodic} with respect to a measure $\phi(w) : \expSpace \to \Re^+$ if and only if 
        \begin{equation} \label{eq:erg_def}
            \lim_{t_f\to \infty} \frac{1}{t_f} \int_{t_0}^{t_f} \mu(g(x(t))) dt = \int_\expSpace \phi(w) \mu(w) dw
        \end{equation}
        for all Lebesgue integrable functions, $\mu \in \mathcal{L}^1$~\cite{scott2009capturing}.
        Because we can not run a robot for $t_f\to \infty$, we consider $t_f < \infty$ where trajectories are sub-ergodic. 
        For a finite $t_f$, where $x(t)$ is a deterministic trajectory we define the left-hand side of Eq.~\eqref{eq:erg_def} as the time-averaged trajectory statistics
        \begin{equation}\label{eq:time_avg}
            c(w, x(t)) = \frac{1}{t_f} \int_{t_0}^{t_f} \delta[w - g(x(t))] dt
        \end{equation}
        where $\delta$ is the Dirac delta function (in place of $\mu$) and $w \in \expSpace$ is a point in the exploration space.
        Using Eq.~\eqref{eq:time_avg} as part of an optimization routine is not possible in the current form as the delta function is not differentiable. 
        To define the ergodic metric for optimization, we use spectral methods and construct a metric in the Fourier space~\cite{mathew2011metrics, scott2009capturing,miller2015ergodic}.

        Let us define the $k^\text{th} \in \mathbb{N}^{v}$ cosine Fourier basis function as 
        \begin{equation}
            F_k(w) = \frac{1}{h_k} \prod_{i=0}^{v-1} \cos\left(\frac{w_i k_i \pi}{L_i}\right)
        \end{equation}
        and $h_k$ is a normalizing factor (see \cite{miller2015ergodic,mathew2011metrics}).
        Then, the ergodic metric is defined as 
        \begin{align}\label{eq:ergodic_met}
            &\ergMetr(x(t), \phi) = \sum_{k\in \mathcal{K}^v} \Lambda_k \left( c_k - \phi_k \right)^2 \\
            &= \sum_{k\in  \mathcal{K}^v} \Lambda_k \left( \frac{1}{t_f}\int_{t_0}^{t_f}F_k(g(x(t))) dt - \int_{\expSpace} \phi(w) F_k(w)dw \right)^2 \nonumber
        \end{align}
        where $k\in\mathcal{K}^v\subset \mathbb{N}^v$ is the set of all fundamental frequencies, $c_k$ and $\phi_k$ are the $k^\text{th}$ Fourier decomposition of $c(w, x(t))$ and $\phi(w)$, respectively, and
        $\Lambda_k=(1 + \left\Vert k \right\Vert)^{-\frac{v+1}{2}}$ is a weight coefficient that places higher importance on lower-frequency modes.

        We formulate the ergodic trajectory optimization problem as the following minimization problem over state and control trajectories $x(t), u(t)$:
        
        \noindent
        \underline{Ergodic Trajectory Optimization:}
        \begin{subequations}\label{eq:erg_traj_opt}
            \begin{align}
                &\min_{x(t),u(t)} \Big{\{}\mathcal{E}(x(t), \phi) + \int_{t_0}^{t_f} u(t)^\top \mathbf{R} u(t)dt \Big{\}} \\ 
                &\text{s.t. } \quad
                    \begin{cases}
                        x \in \mathcal{X}, u \in \mathcal{U}, g(x) \in \mathcal{W} \\
                        x(t_0) = \bar{x}_0, x(t_f) = \bar{x}_f\\
                        \dot{x} = f(x, u) \\
                        h_1(x,u) \le 0, h_2(x,u) = 0 \\
                    \end{cases}
            \end{align}
        \end{subequations}
        where $h_1$, $h_2$ are inequality and equality constraints, and $\mathbf{R} \in \mathbb{R}^{m\times m}$ is a diagonal positive-definite matrix that penalizes control. 

    \subsection{Time-Optimal Control Problem Statement}
    
        Given the same robot trajectories $x(t), u(t)$ and dynamics $\dot{x}=f(x,u)$ defined previously, we define the time-optimal control problem as minimizing time $t_f$.
        However, this alone renders an ill-posed problem with a trivial solution $t_f=0$. 
        To circumvent this issue, it is common to include some terminal state condition $x(t_f) = x_f$ which needs to be satisfied. 
        In addition, constraints are commonly included to further restrict the solution space as one can end up with ``infinite'' control input which is not feasible on robotic systems.     
        The formulation of the time-optimal control problem is then
        
        \noindent
        \vspace{2mm}
        \underline{Time-Optimal Control Problem:}
        \begin{subequations}\label{eq:time-opt}
        \begin{align}
            &\min_{x(t),u(t), t_f} \quad t_f \\ 
            &\text{s.t. } \quad
                \begin{cases}
                    x \in \mathcal{X}, u \in \mathcal{U}, t_f > 0 \\
                    x(t_0) = x_0, x(t_f) = x_f,  \\ 
                    \dot{x} = f(x, u) \\
                    h_1(x,u) \le 0, h_2(x,u) = 0
                \end{cases} \label{subeqn:time-opt-constr}
        \end{align}
        \end{subequations}
        where we optimize over $x(t), u(t)$ and $t_f$, $h_1, h_2$ are inequality and equality constraints respectively, and $\bar{x}_f$ is a terminal state. 

        As an aside, it is worth noting that time-optimal control problems are often similar in formulation to time-optimal trajectory problems. 
        The different use cases depend on the time horizon settings. 
        In long-time horizons, it is often preferred to solve for robot trajectories directly and use them to track points~\cite{foehn2021time}. 
        It is possible to find closed-form feedback-control solutions based on the conditions of optimality from Pontryagin's maximum principle~\cite{kopp1962pontryagin}. 
        In this work, we focus on showing that one can prove the conditions of optimality for the time-optimal ergodic control problem and obtain solutions using a direct trajectory optimization method. 
        We leave computing closed-form solutions to future work. 
        
        Numerical solutions to Eq.~\ref{eq:time-opt} can be obtained by discretizing trajectories over $N$ ``knot'' points where a discrete time is calculated as $\Delta t = \frac{t_f}{N}$~\cite{foehn2021time}.
        The continuous-time dynamics of the robot are then transcribed using an integration method (e.g., forward Euler, implicit Euler, Runge-Kutta) where $x_{t+\Delta t} = x_t + \Delta t f(x_t, u_t)$ denotes an explicit Euler method and the subscripts refer to discrete time points. 
        In the following section, we formalize the time-optimal ergodic search problem and derive analytical solutions using conditions of optimality and numerical solutions using direct trajectory transcription.

\section{Time-Optimal Ergodic Search} \label{sec:time_opt_erg}

    In this section, we formulate and pose the time-optimal ergodic search problem. 
    Solutions to the problem are presented in two manners: 1) analytically through conditions of optimality; and 2) numerically through a direct transcription approach. 
    The analytical approach is used to establish conditions of optimality (which can be seen as continuous time analogies of the KKT-conditions~\cite{tabak1971optimal, kuhn1951nonlinear}). 
    We derive these results as purely analytical, with the intent that these conditions are to be used to further analyze the structure of the time-optimal ergodic search problem in future work. 
    The numerical approach provides a direct form of calculating robot trajectories and control solutions for the time-optimal ergodic search problem. 

    \subsection{Problem Formulation}

        Let us consider the same robot with state and control trajectories $x(t), u(t)$ with continuous time (nonlinear) dynamics $\dot{x} = f(x,u)$. 
        In addition, consider the bounded exploration space $\expSpace$ with map $g: \mathcal{X} \to \expSpace$ and information measure $\phi(w)$.
        Our goal is to optimize search time $t_f$ while minimizing the ergodic metric Eq.~\eqref{eq:ergodic_met}.
        Let us first make more explicit the dependence of the ergodic metric on the time $t_f$
        \begin{align}\label{eq:ergodic_met_time}
            &\ergMetr(x(t), \phi, t_f) = \sum_{k\in \mathcal{K}^v} \Lambda_k \left( c_k(x(t), t_f) - \phi_k \right)^2 \\
            &= \sum_{k\in \mathcal{K}^v} \Lambda_k \left( \frac{1}{t_f}\int_{t_0}^{t_f}F_k(g(x(t))) dt - \int_{\expSpace} \phi(w) F_k(w)dw \right)^2, \nonumber
        \end{align}
        where $c_k(x(t), t_f)$ is the term that depends on time. 
        
        According to Eq.~\eqref{eq:erg_def}, a trajectory can become ergodic as $t_f \to \infty$. 
        This makes the problem of time-optimal ergodic search ill-posed as $t_f$ will be significantly large if the ergodic metric is to be minimized. 
        To solve for this, we propose to include the ergodic metric as an inequality constraint $\mathcal{E}(x(t), \phi, t_f) \le \gamma$, where $\gamma \in \mathbb{R}^+$ is an upper bound on ergodicity. 
        As an example of how $\gamma$ can still provide sufficient coverage, consider that the ergodic metric is defined over Fourier modes. 
        Satisfying the ergodic constraint then requires minimizing the distance between the $k^\text{th}$ modes of $c_k$ and $\phi_k$ such that the sum of squares is less than $\gamma$. 
        Because we are working with \emph{spectral} Fourier modes that span the exploration space $\expSpace$, this implies that $\gamma$ imposes a lower bound on ergodic coverage based on the spectral bands with the highest amplitudes.
        Therefore, we can use the ergodic inequality constraint as an additional condition for time optimization so the problem is well-posed. 

        Including a terminal state condition $x(t_f) = x_f$ as a secondary boundary condition, the time-optimal ergodic search problem using~\eqref{eq:time-opt} and~\eqref{eq:erg_traj_opt}is defined as:

        \vspace{2mm}
        \noindent
        \underline{Time-Optimal Ergodic Trajectory Optimization:}
        \begin{subequations}\label{eq:time-opt-erg}
            \begin{align}
                &\min_{x(t),u(t),t_f} \quad t_f \\ 
                &\text{s.t. } \quad
                    \begin{cases}
                        x \in \mathcal{X}, u \in \mathcal{U} \\
                        x(t_0) = x_0 \\
                        \dot{x} = f(x, u) \\
                        x(t_f) = x_f, g(x) \in \mathcal{W} \\ 
                        h_1(x,u) \le 0, h_2(x,u) = 0 \\
                        \mathcal{E}(x(t), \phi, t_f) \leq \gamma, t_f >0
                    \end{cases}
            \end{align}
        \end{subequations}
        where the last set of constraints ensures that solutions are minimizing the ergodic metric up to $\gamma$ and time is always positive. 
        In this problem, $h_1$ and $h_2$ often encode additional control constraints, e.g., so that $u(t)$ is bounded or that $x(t)$ avoids obstacles in the environment.
        The following subsections propose solutions to the time-optimal ergodic search problem.

    \subsection{Indirect Solution via Pontryagin's Maximum Principle}

        We can show that there exist analytical conditions of optimality (as done with the original time-optimal control results) for the time-optimal ergodic search problem in~\eqref{eq:time-opt-erg}.
        To show this, we first express the problem~\eqref{eq:time-opt-erg} without constraints $h_1, h_2$ (these can be later introduced, but for now, we are interested in the simpler problem). 
        In addition, we formulate an objective function using Lagrange multipliers $\lambda(t)$, $\rho_1$, and $\rho_2$:
        \begin{align} \label{eq:bolza_form}
            \mathcal{J}(x(t), u(t), t_f) &= \rho_1\bar{\mathcal{E}}(x(t), t_f)
            + \rho_2^\top (x - \bar{x})\mid_{t_f} \nonumber \\
            & + \int_{t_0}^{t_f} 1 + \lambda^\top\left( f(x,u) - \dot{x}\right) dt
        \end{align}
        where $\bar{\mathcal{E}}(x(t), t_f) = -\log(-\mathcal{E}(x(t), \phi, t_f)+\gamma)$ is a log barrier term that represents the inequality constraint~\cite{potra2000interior}. 
        The representation in Eq.~\eqref{eq:bolza_form} appears to be in a Bolza form where the cost of time $t_f$ is introduced with the added $1$ under the integral, i.e., $\int_{0}^{t_f} 1 dt = t_f$. 
        Typically, it is sufficient to apply the maximum principle to the Bolza form and obtain conditions of optimality. 
        However, note that $\bar{\mathcal{E}}$ requires the full trajectory and not simply the terminal time which makes the problem not in Bolza form.
        As a result, we need to further simplify the problem. 

        To do so, we define an extended ergodic state~\cite{de2016ergodic}: 
        \begin{definition} \label{def:1}\textbf{Extended ergodic state.}
            The ergodic metric~\eqref{eq:ergodic_met} can be equivalently expressed as 
            \begin{align*}
                \mathcal{E}(x(t), \phi, t_f) &= \sum_{k\in \mathcal{K}^v} \Lambda_k \left( c_k(x(t), t_f) - \phi_k \right)^2 \\ 
                &= \frac{1}{t_f^2}\Vert z(t_f) \Vert_\mathbf{\Lambda}^2
            \end{align*}
            where $z(t_f) = [z_0, z_1, \ldots, z_{|\mathcal{K}^v|}]^\top$ is the solution to
            \begin{equation}
                \dot{z}_k = F_k(g(x(t))) - \phi_k
            \end{equation}
            with initial condition $z(t_0) = \mathbf{0}$, and $\mathbf{\Lambda} = \text{diag}(\Lambda)$ is a diagonal matrix consisting of the weights $\Lambda=[\Lambda_0, \ldots, \Lambda_{|\mathcal{K}^v|}]$.
        \end{definition}
        \begin{proof}
            From~\cite{de2016ergodic}, it can be shown that by multiplying time $t$ we can define
            \begin{align} \label{eq:equiv_zk}
                z_k(t) & = c_k(x(t), t) - t\phi_k \nonumber \\
                        & = \int_0^t F_k(g(x(\tau)))d\tau - t\int_\mathcal{W} F_k(w)\phi(w)dw.
            \end{align}
            When $t=t_f$, it can be readily shown that $\sum_{k\in  \mathcal{K}^v} \Lambda_k \left( c_k(x(t), t_f) - \phi_k \right)^2 = \frac{1}{t_f^2}\Vert z(t_f) \Vert_\mathbf{\Lambda}^2$.
            Taking the derivative of Eq.~\eqref{eq:equiv_zk} with respect to time, we get $\dot{z}_k = F_k(g(x(t))) - \phi_k$
            which we define as the governing differential equation for the extended ergodic state.
        \end{proof}
    
        With the extended ergodic state, we are able to extend the dynamics of the system 
        \begin{equation} \label{eq:extended_dynamics}
            \dot{\bar{x}} = \bar{f}(\bar{x}, u) = \begin{bmatrix}f(x, u) \\ \mathbf{F}_k(g(x(t))) - \Phi_k\end{bmatrix}
        \end{equation}
        where $\bar{x} = [x^\top, z^\top]^\top$ is the extended state, $\mathbf{F}_k(w)=[F_0(w), F_1(w), \ldots, F_{| \mathcal{K}^v|}(w)]^\top$ is a vector of all the Fourier basis functions, and $\Phi_k = [\phi_0,\phi_1, \ldots, \phi_{| \mathcal{K}^v|}]^\top$ is a vector of all the Fourier coefficients of $\phi$.
        The objective function can then be written compactly as 
        \begin{align} \label{eq:obj_compact}
            \mathcal{J}(\bar{x}(t), u(t), t_f) &= \rho^\top \psi(\bar{x}, t_f)\mid_{t_f} \nonumber \\
            & + \int_{t_0}^{t_f} 1 + \lambda^\top\left( \bar{f}(\bar{x},u) - \dot{\bar{x}}\right) dt
        \end{align}
        where $\psi(\bar{x}, t_f) = [-\log(-\frac{1}{t_f^2}\Vert z(t_f) \Vert_\mathbf{\Lambda}^2+\gamma), (x - \bar{x})^\top]^\top$.
        Defining the Hamiltonian of the control system as 
        \begin{equation}\label{eq:ham}
            H(\bar{x},u,\lambda) = 1 + \lambda^\top\bar{f}(\bar{x},u)
        \end{equation}
        we are able to apply the maximum principle and obtain conditions of optimality. 
        \begin{theorem} \label{thrm:1}\textbf{Conditions of (Local) Optimality.}
            For a control system that follows the dynamics $\dot{\bar{x}} = \bar{f}(\bar{x},u)$ (Def.~\ref{def:1}) and Hamiltonian
            \begin{equation*}
                H(\bar{x},u,\lambda) = 1 + \lambda^\top\bar{f}(\bar{x},u),
            \end{equation*}
            the tuple $(\bar{x}(t), u(t), \lambda(t), t_f)$ is a locally optimal solution to the time-optimal ergodic search problem~\eqref{eq:time-opt-erg} over the free time interval $t \in [0, t_f]$ if the following conditions are satisfied:
            \begin{subequations}\label{eq:cond_opt}
                \begin{align}
                    \psi(\bar{x}(t_f), t_f) = 0 \\ 
                    \bar{x}(t_0) = \bar{x}_0 \\ 
                    \dot{\bar{x}} = \frac{\partial H}{\partial \lambda}^\top \\ 
                    \dot{\lambda} = - \frac{\partial H}{\partial \bar{x}}^\top \\ 
                    u^\star = \argmin_{u \in \mathcal{U}} H(\bar{x}^\star, u, \lambda^\star) \\
                    \lambda(t_f) =  \frac{\partial \psi}{\partial \bar{x}}^\top \rho \bigg|_{t_f} \\
                    H(\bar{x}(t_f),u(t_f),\lambda(t_f), t_f) = -\rho^\top \frac{\partial \psi }{\partial t_f} \bigg|_{t_f}
                \end{align}
            \end{subequations}
        \end{theorem}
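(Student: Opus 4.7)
The plan is to apply Pontryagin's maximum principle to the Bolza-form objective in Eq.~\eqref{eq:obj_compact}, leveraging the fact that the extended ergodic state (Def.~\ref{def:1}) has absorbed the trajectory-dependent ergodic metric into a pure terminal cost. First I would compute the first variation $\delta \mathcal{J}$ of the augmented objective with respect to $\bar{x}(t)$, $u(t)$, $\lambda(t)$, and the free terminal time $t_f$, treating each perturbation as independent. The variation with respect to $\lambda$ immediately reproduces the extended dynamics $\dot{\bar{x}} = \bar{f}(\bar{x},u) = (\partial H/\partial \lambda)^\top$, recovering the state equation.

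Next I would integrate the term $-\lambda^\top \dot{\bar{x}}$ by parts so that $\delta \bar{x}$ appears algebraically rather than differentially. Collecting the interior integrand multiplying $\delta \bar{x}$ yields the adjoint equation $\dot{\lambda} = -(\partial H/\partial \bar{x})^\top$, while the boundary contribution at $t_f$ combines with $\rho^\top \delta \psi$ from the terminal cost to produce the transversality condition $\lambda(t_f) = (\partial \psi/\partial \bar{x})^\top \rho \mid_{t_f}$. The fixed initial condition $\bar{x}(t_0) = \bar{x}_0$ annihilates the corresponding boundary term at $t_0$. The pointwise stationarity requirement in $u$ is then strengthened to the global Hamiltonian-minimization condition $u^\star = \argmin_{u \in \mathcal{U}} H(\bar{x}^\star,u,\lambda^\star)$, which is the correct form whenever $\mathcal{U}$ has active bounds or the control enters $\bar{f}$ non-smoothly.

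The main obstacle will be handling the free terminal time carefully, since the upper limit of integration, the terminal cost $\psi(\bar{x}(t_f), t_f)$, and the optimal trajectory itself all vary with $t_f$. I would invoke the standard Leibniz-rule argument: a perturbation $\delta t_f$ produces an extra boundary contribution $[1 + \lambda^\top \bar{f}(\bar{x},u)]\,\delta t_f = H(t_f)\,\delta t_f$ from evaluating the integrand at the moving endpoint, plus a direct term $\rho^\top (\partial \psi/\partial t_f)\,\delta t_f$ arising from the explicit $t_f$-dependence inside $\psi$ through the $1/t_f^2$ factor in the log-barrier. Setting the combined coefficient of $\delta t_f$ to zero yields the transversality-in-time condition $H(\bar{x}(t_f),u(t_f),\lambda(t_f),t_f) = -\rho^\top (\partial \psi/\partial t_f)\mid_{t_f}$. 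Finally, the feasibility statement $\psi(\bar{x}(t_f),t_f) = 0$ enforces that the terminal state equality and the log-barrier encoded ergodic inequality are active at the optimum; a rigorous justification of the barrier substitution would appeal to standard interior-point arguments~\cite{potra2000interior}, with the true KKT multiplier on the ergodic inequality recovered in the limit as the barrier parameter tightens.
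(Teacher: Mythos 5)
Your proposal is correct and follows essentially the same route as the paper's proof: take the first variation of the Bolza-form objective \eqref{eq:obj_compact} built on the extended ergodic state, integrate the $\lambda^\top\delta\dot{\bar{x}}$ term by parts, and collect the stationarity, adjoint, transversality, and free-terminal-time conditions, with the pointwise stationarity in $u$ strengthened to the $\argmin$ form. The only cosmetic difference is that the paper obtains $\psi(\bar{x}(t_f),t_f)=0$ by also varying the multiplier $\rho$, whereas you assert it as feasibility of the terminal/barrier constraints; the content is the same.
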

        \begin{proof}
            See Appendix~\ref{appendix:proof}. 
        \end{proof}
        This theorem provides evidence that time-optimal ergodic solutions (if they exist) can satisfy a set of continuous-time conditions for optimality. 
        In practice, it is possible to use these conditions to generate control solutions to the time-optimal ergodic search problem. 
        However, we found that they do not work for long time horizons due to the numerical instabilities when computing the two-point boundary value problem from equations~\eqref{eq:cond_opt}(c) and (d). 
        Instead, we use an approximate direct optimization method using the KKT conditions over $N$ discrete knot points to solve for time-optimal ergodic trajectories which we describe in the following subsection. 

    \subsection{Direct Solutions via Transcription}
        In this section, we outline a direct transcription method for numerically solving~\eqref{eq:time-opt-erg}. 
        Our approach is similar to that of prior time-optimal planning methods~\cite{foehn2021time, posa2014direct}.
        
        We first begin by defining the continuous-time dynamics $\dot{x} = f(x,u)$ as a discrete-time system over a sequence of $N$ discretized ``knot'' points:
        \begin{equation}
            x_{t+1} = x_t + \Delta t f(x_t, u_t)
        \end{equation}
        where $\Delta t = \frac{t_f}{N}$ and the subscripts define a discrete time point. Note that we depict an Explicit Euler integration scheme, but this is not specific to our method and can be changed to a Runge-Kutta or Implicit integration scheme. 
        Next, we define the optimization variables as $\mathbf{x} = \{ x_0, \ldots, x_N \}$, $\mathbf{u} = \{ u_0, \ldots, u_{N-1} \}$. 
        The ergodic metric in discrete-time becomes 
        \begin{align}
            &\mathcal{E}(x(t), \phi, t_f) \approx \hat{\mathcal{E}}(\mathbf{x}, \phi, t_f)
            = \sum_{k\in  \mathcal{K}^v} \Lambda_k \left( c_k(\mathbf{x}, t_f) - \phi_k \right)^2  \nonumber\\
            &= \sum_{k\in  \mathcal{K}^v} \Lambda_k \left( \frac{1}{t_f}\sum_{t=0}^{N-1}F_k(g(x_t)) \Delta t - \phi_k\right)^2
        \end{align}
        Since we describe the time discretization $\Delta t$ as derived from $t_f$, we can directly write the optimization problem over $\mathbf{x}, \mathbf{u}, t_f$ as a nonlinear program (NLP): 
        
        \vspace{2mm}
        \noindent
        \underline{Direct Time-Optimal Ergodic Trajectory Optimization:}
        \begin{subequations}\label{eq:direct_time-opt-erg}
            \begin{align}
                &\min_{\mathbf{x},\mathbf{u},t_f} \quad t_f \\ 
                &\text{s.t. } \quad
                    \begin{cases}
                        \Delta t = \frac{t_f}{N} \\
                        x_t \in \mathcal{X}, u_t \in \mathcal{U} \\
                        x_0 = x_0, x_{t+1} = x_t + \Delta t f(x_t, u_t) \\
                        x_N = x_f, g(x_t) \in \mathcal{W} \\ 
                        h_1(\mathbf{x},\mathbf{u}) \le 0, h_2(\mathbf{x},\mathbf{u}) = 0 \\
                        \hat{\mathcal{E}}(\mathbf{x}, \phi, t_f) \leq \gamma, t_f >0
                    \end{cases}
            \end{align}
        \end{subequations}
        The optimization in \eqref{eq:direct_time-opt-erg} is solved as a direct-collocation problem, i.e., optimization variables are free where constraints impose physical robot limitations and dynamics. Note that as a result of this implementation, the initial conditions may be chosen arbitrarily and the chosen solver will ``stitch'' together a trajectory based on the relevant constraints. 
        We use a NLP solver (specifically a custom variation of an augmented Lagrangian constrained optimization solver~\cite{potra2000interior, kuhn1951nonlinear}), that directly solves~\eqref{eq:direct_time-opt-erg}. 
        Solutions are verified against conditions (to establish convergence) using the KKT conditions of the NLP problem~\cite{kuhn1951nonlinear}. Note that it is possible to use the optimality conditions in Theorem~\ref{thrm:1}; however, these will only be approximate conditions due to the time discretization.
        Furthermore, the choice of initial condition determines whether the solver converges. Because the ergodic metric is highly nonlinear and non-convex, the dependence of solutions as a function of initial conditions may vary drastically (as shown in~\cite{miller2013trajectory}). We fix the initial trajectory conditions to be a linear interpolation between the initial and final conditions for all examples. 
        In the following section, we demonstrate simulated and experimental results for solutions to~\eqref{eq:direct_time-opt-erg}.

\section{Results} \label{sec:results}

    In this section, we demonstrate simulated and experimental results for time-optimal ergodic search in several scenarios.
    The proposed approach is evaluated as a trajectory optimizer in settings where obstacles in the environment are known and the utility of coverage over specific areas is provided to the robot. \footnote{The construction of the coverage utility $\phi$ is often a function of new measurements collected from executing the time-optimal trajectories and independent of how coverage trajectories are optimized.}
    Specifically, we are interested in scenarios that allow us to investigate aspects of the proposed approach that answer the following questions:
    \begin{enumerate}
        \item[Q1:] Can we generate time-optimal ergodic trajectories, and how do they compare to fixed-time ergodic trajectories?
        \item[Q2:] Do we retain the ability to bias the search with a non-uniform $\phi$?
        \item[Q3:] What influence does the ergodic upper bound $\gamma$ have on optimized time (and control)?
        \item[Q4:] How much do trajectory solutions change with changes in discretizing knot points $N$ and how do initial conditions affect generated solutions?
        \item[Q5:] Is it possible to include constraints in the problem formulation~\eqref{eq:direct_time-opt-erg} to explore in more realistic scenarios, e.g., a cluttered environment?
        \item[Q6:] And last, do optimized trajectories transfer to real-world drone applications for search in a cluttered environment?
    \end{enumerate}
    We organize the results section such that each of these questions are answered sequentially (with A\#:) and through the figures starting from Fig. 2. 
    Implementation details are provided in the text and in Appendix~\ref{appendix:implementation}. 

        \begin{figure}
            \centering
            \includegraphics[width=\linewidth]{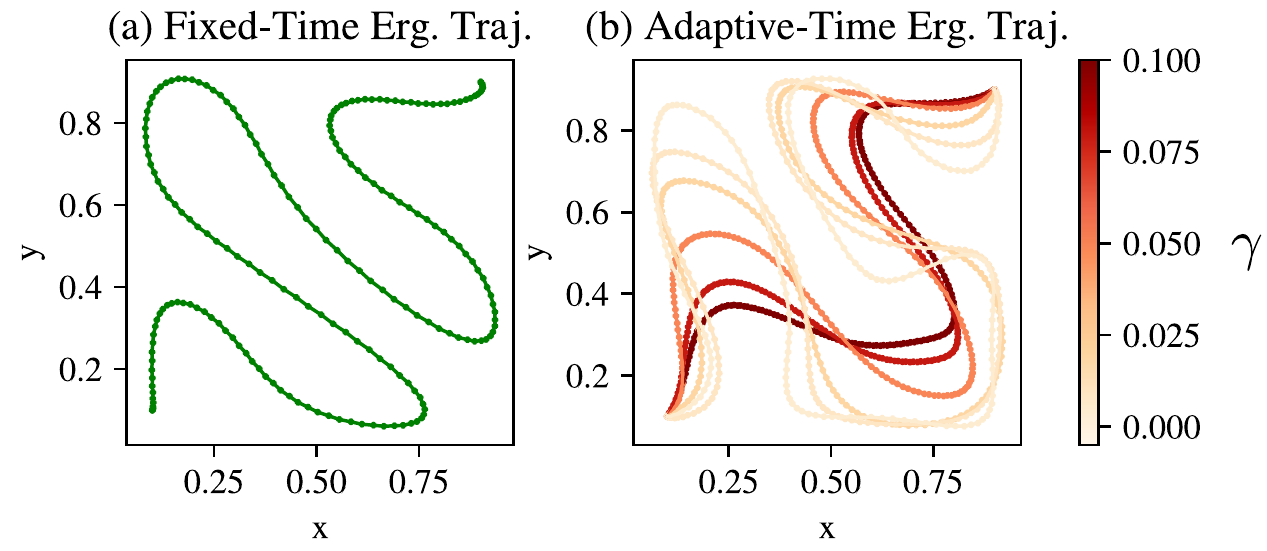}
            \caption{\textbf{Ergodic Trajectory Sensitivity Analysis.} Trajectory solutions found using a uniform distribution $\phi$. (a) A fixed time $t_f=10$s ergodic trajectory solution with resulting ergodicity $\mathcal{E}=0.007$. (b) Solutions to time-optimal ergodic trajectories Eq.(\ref{eq:time-opt-erg}) with varying $\gamma$. Time solutions range from $5$-$10$s depending on $\gamma$ with equivalent coverage to the fixed-time ergodic trajectory being obtained with $<6$s. }
            \label{fig:comparison}
        \end{figure}
    \subsection{Simulated Results}

        \noindent
        \textbf{A1: Comparison to Fixed-Time Ergodic Search.} 
        Our first result compares the proposed direct solution~\eqref{eq:direct_time-opt-erg} to the original ergodic trajectory optimization problem~\eqref{eq:erg_traj_opt}.
        For this result, we use a 2-D double integrator (or point-mass) dynamical system as our simulated robotic system whose goal is to uniformly explore a bounded exploration space $\expSpace$. 
        We specify $\phi(w)$ as a uniform distribution and use initial solver parameters $t_{f,\text{init}}=10s$, $N=200$, and control penalization matrix $\mathbf{R} = 0$ for \eqref{eq:erg_traj_opt}.
        
        Fig.~\ref{fig:comparison} is a side-by-side comparison against the fixed-time ergodic trajectory solved using~\eqref{eq:erg_traj_opt} and the proposed time-optimal approach~\eqref{eq:direct_time-opt-erg}. 
        Note that with the fixed-time ergodic search method, planned trajectories are limited in how they explore an area based on the initial planning time $t_f$. 
        In contrast, optimizing time alongside ergodic trajectories can be seen to provide a range of granularity in how much the trajectory uniformly covers the space. 
        Specifically, as one decreases the value of $\gamma$, optimized trajectories focus more on being ergodic and have less emphasis on optimizing time. 
        With larger values of $\gamma$, time is prioritized with less emphasis on ergodicity so long as trajectories satisfy the ergodic upper bound.

               \begin{figure}
            \centering
            \includegraphics[width=\linewidth]{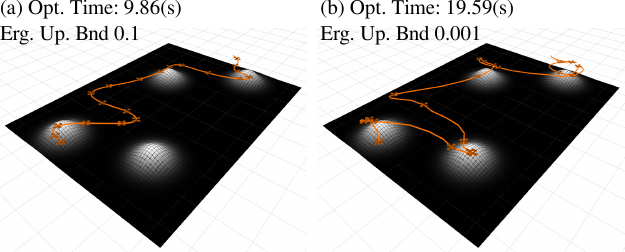}
            \caption{\textbf{Time Opt. Ergodic Search Over Info. Distribution.} Using the information distribution $\phi$, it is possible to solve biased time-optimal ergodic search problems. Shown above are time-optimal trajectory solutions for (a) $\gamma=0.1$, and (b) $\gamma=0.001$. To compensate for the increased coverage requirements imposed by $\phi$, ergodic trajectories are solved to optimize time spent over areas of high information (illustrated as the lighter regions). With tighter requirements on ergodicity with $\gamma=0.001$, it can be seen that the optimized trajectories spend more time proportionally in the areas of high information.   }
            \label{fig:bias_search}
        \end{figure}

        \begin{figure}
            \centering
            \includegraphics[width=0.8\linewidth]{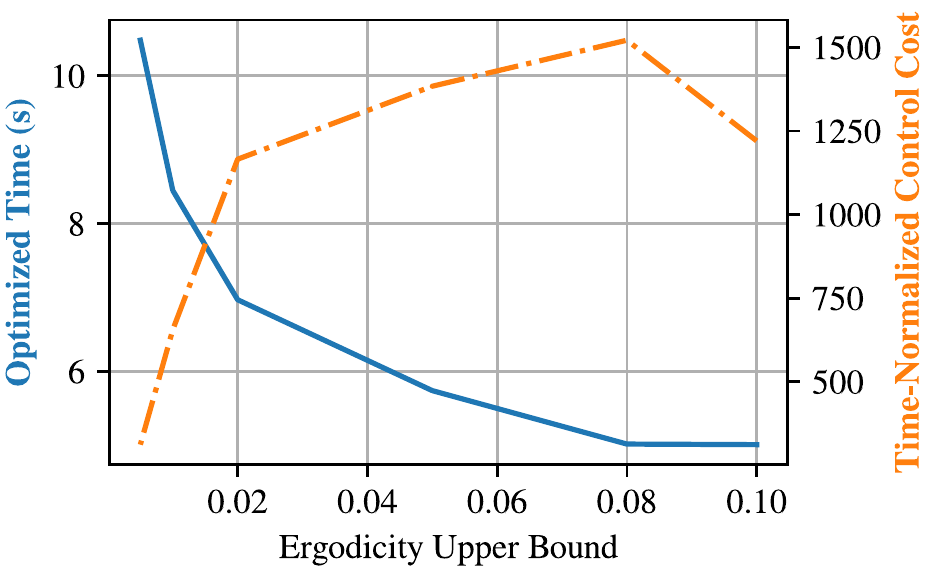}
            \caption{\textbf{Time-Opt. Variations to Ergodicity Upper Bound.} With an increased ergodicity upper bound $\gamma$ (i.e., less emphasis on coverage), time is prioritized. Aligned with ergodic theory~\cite{mathew2011metrics}, as $\gamma \to 0$, the optimized time asymptotically approaches infinity which is required for complete ergodic coverage. Interestingly, time-normalized control cost $\frac{1}{t_f}\int_t \Vert u(t) \Vert dt$ curve implies increased actuation when trading off between coverage requirements and minimizing time. Trajectories are solved using uniform coverage distribution $\phi$.  }
            \label{fig:erg_bnd_ablation}
        \end{figure}

        \vspace{1.5mm}
        \noindent
        \textbf{A2: Biasing Time-Optimal Search with Inf. Distribution.} We can further investigate the efficacy of time-optimal search with respect to a non-uniform $\phi$. 
        This is of interest because time optimization may impact how much time trajectories spend on high-information areas. 
        In addition, resulting ergodic trajectories can overlook important high-information areas that are critical for search. 

        To test this, we define a non-uniform distribution for $\phi$ as illustrated in Fig.~\ref{fig:bias_search}. 
        The distribution consists of four identical Gaussian peaks placed over the exploration space $\expSpace$ (see Appendix~\ref{appendix:implementation} for more detail).
        We use the same 2-D point mass dynamics but test only $\gamma=0.1$ and $\gamma=0.001$ which indicate a coarse search with an emphasis on optimizing time and a finer search with less emphasis on time respectively. 
        Trajectories illustrated in Fig.~\ref{fig:bias_search} show that even with a high $\gamma$, the generated trajectory still visits each Gaussian peak.
        However, the trajectory does not spend too much time in the area and brushes past the first peak. 
        This is the result of the ergodic inequality constraint and the balance of time versus coverage. 
        This can be seen in the difference in elapsed optimal times of $9.86s$ and $19.59$ seconds respectively (almost $2\times$ increase in time in response to 2 orders of magnitude of reduction on $\gamma$. 

       \begin{table}[h!]
            \vspace{-6mm}
            \centering  
            \caption{Optimized Time Parameter Sensitivity}
            \begin{tabular}{|l|cc|}
            \hline
            \multirow{2}{*}{Parameter} & \multicolumn{2}{c|}{Optimized Time $\gamma=0.05$} \\ \cline{2-3} 
                                       & \multicolumn{1}{c|}{Mean}     & Std. Deviation    \\ \hline
            $t_{f,\text{init}}$ (4-8)s & \multicolumn{1}{c|}{4.97s}    & $\pm$ 0.23s           \\ \hline
            Knot Points $N$ (50-600)   & \multicolumn{1}{c|}{5.45s}    & $\pm$ 0.39s            \\ \hline
            \end{tabular}
            \label{tab:ablation}
        \end{table}

        \vspace{1.5mm}
        \noindent
        \textbf{A3,4: Parameter Ablation Studies.} Given the drastic change in performance of optimized time against the change of $\gamma$ for optimized ergodic trajectories, we investigate parameter sensitivity through two ablation studies.
        The first study looks at the change of optimal time against changes in $\gamma$. 
        The second study investigates the effect of initial conditions and the influence of the time discretization introduced by the knot points $N$.
        For both studies, the same 2-D point-mass system is used and $\phi$ defines a uniform distribution over the bounded exploration space. 

        \begin{figure}[h!]
            \centering
            \includegraphics[width=\linewidth]{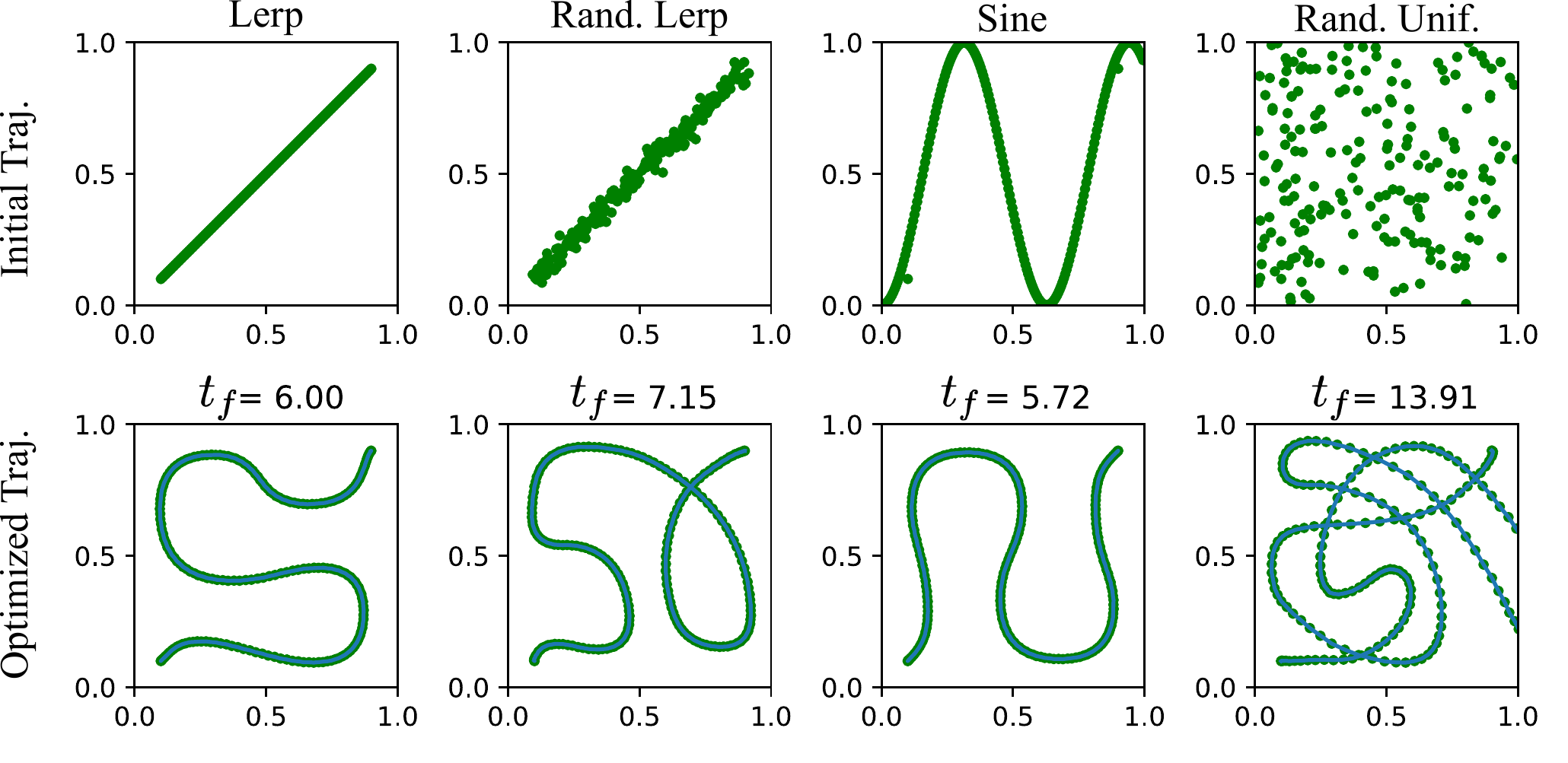}
            \caption{\textbf{Initial Trajectory Ablation.} Here, we show the dependence of time-optimal ergodic solutions on the initial trajectory that was provided to the solver. From left to right we show optimized solutions for linear interpolation (Lerp) from initial to final condition, the linear interpolated trajectory with added normally distributed zero-mean noise with standard deviation $0.02$, a sinusoidal initial condition, and a uniformly random initial condition. Each solution satisfies an ergodicity of $\mathcal{E}=0.01$ with initial time $t_f=10$ and control knots $N=200$. We find that the deviation in final optimized time depends on path smoothness. Non-smooth initial trajectories tend to fall into equally ergodic local minima, causing worse optimized time.  }
            \label{fig:erg_init_ablation}
        \end{figure}
                
        Figure~\ref{fig:erg_bnd_ablation} illustrates the results of the first study. 
        As $\gamma\to0$, the ergodic inequality constraint becomes more of an equality constraint. 
        This is due to the ergodic metric being lower-bounded by $0$ by definition. 
        As a result, the smaller $\gamma$ becomes, the more the optimized time tends towards $\infty$!\footnote{There is a point where the solver does not provide solutions as $t_f$ is required to be significantly larger and is dependent on the initial trajectory condition and the dynamic constraints.}
        This exactly corresponds to the statement of ergodicity~\eqref{eq:erg_def} that defines a trajectory as being ergodic only at the limit of $t_f\to \infty$.
        What is interesting in the control trade-off shown in Fig.~\ref{fig:erg_bnd_ablation}. 
        Plotted is the time-normalized control $\frac{1}{t_f}\int_{0}^{t_f} \Vert u(t) \Vert dt$
        where the value of $u(t)$ is bounded by $u_\text{max}$ through added constraints. 
        In low $\gamma$ values (large optimized $t_f$), less control effort is needed to be ergodic. 
        We suspect this is due to the robot leveraging its dynamics to slowly navigate an area without the need to change direction abruptly. 
        On the other spectrum of $\gamma$, control actuation begins to fall as time is prioritized. 
        This is due to $\gamma$ reaching an upper bound on the ergodic metric (as the metric is composed of only cosine functions). 
        Therefore, there is less emphasis to be ergodic and more emphasis to optimize time (less direct changes in actuation). 
        The balance between optimizing time and being ergodic is then shown to require more actuation.

        We further investigate the dependence of the optimized time against the initial condition as the discretizing knot points $N$. 
        Experimental runs are done using the same point-mass dynamics in the bounded environment with a uniform distribution as $\phi$ with $\gamma=0.05$. 
        We vary the initial time $t_{f,\text{init}}$ between $4-8s$ with $1s$ intervals and $N=200$ which we found to be a range where the solver would provide solutions within acceptable tolerances. 
        In addition, we varied the number of knot points between $50-600$ with a resolution of $100$ after $50$ with $t_{f,\text{init}}=10$.
        In Table~\ref{tab:ablation} the optimized time solutions along with the standard deviation are provided. 
        Optimized time solutions tended to stay near $5s$ with a standard deviation of $\pm 0.23s$. We found that knot points having more of an effect on optimized time with $\pm 0.39s$ of standard deviation. 
        The difference in solution is anticipated as ergodic trajectories parameterize the time-average distribution~\eqref{eq:time_avg} which can have an infinite number of solutions that yield the same distribution. 
        As a result, deviations in trajectories may satisfy the ergodic inequality, but provide different time-optimal solutions $t_f$.

        \begin{figure}
            \centering
            \includegraphics[width=\linewidth]{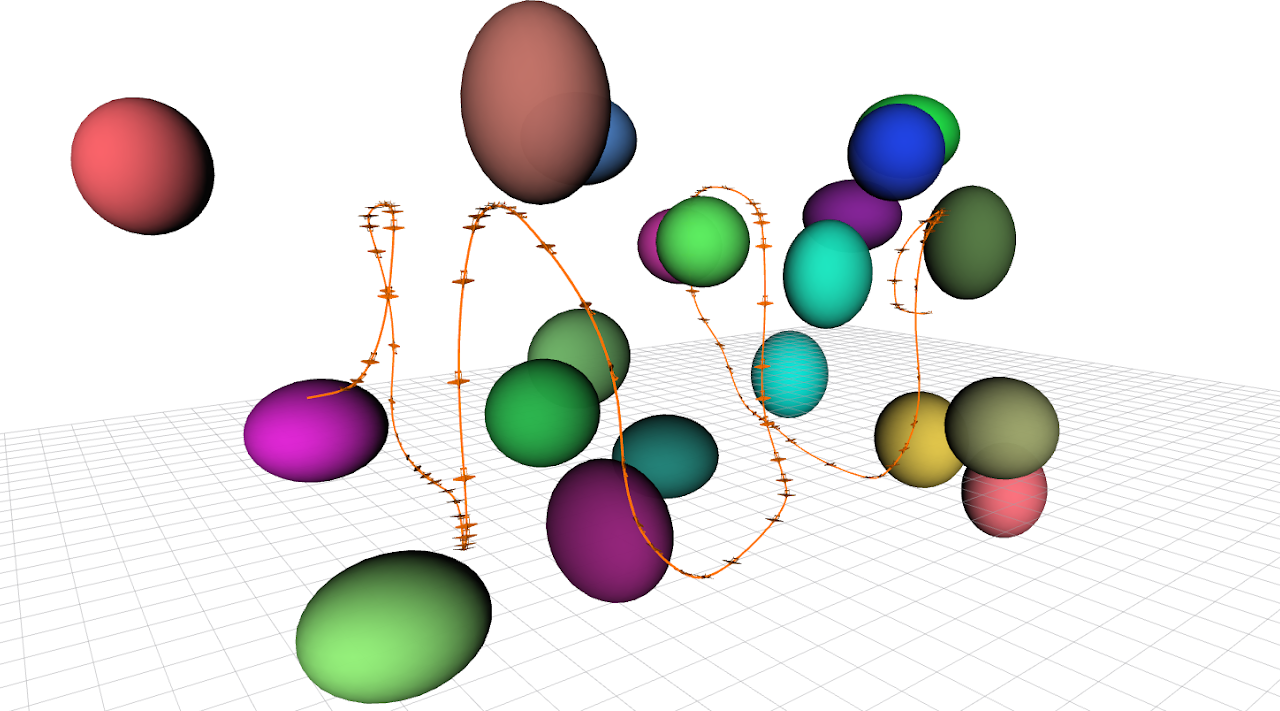}
            \caption{\textbf{Time-Opt. Uniform Ergodic Search with Nonlinear Aircraft Dynamics.} The proposed optimization method is capable of incorporating nonlinear dynamics in $\mathcal{W} \subset \mathbb{R}^3$ with safety-based collision avoidance constraints~\cite{lerch2022safety}. Time-optimal coverage trajectories can be computed ahead of time and executed on the physical system. Collected information can be used to update and bias search.}
            \label{fig:3d_exploration}
        \end{figure}

   \begin{figure*}[ht!]
        \centering
        \includegraphics[width=\textwidth]{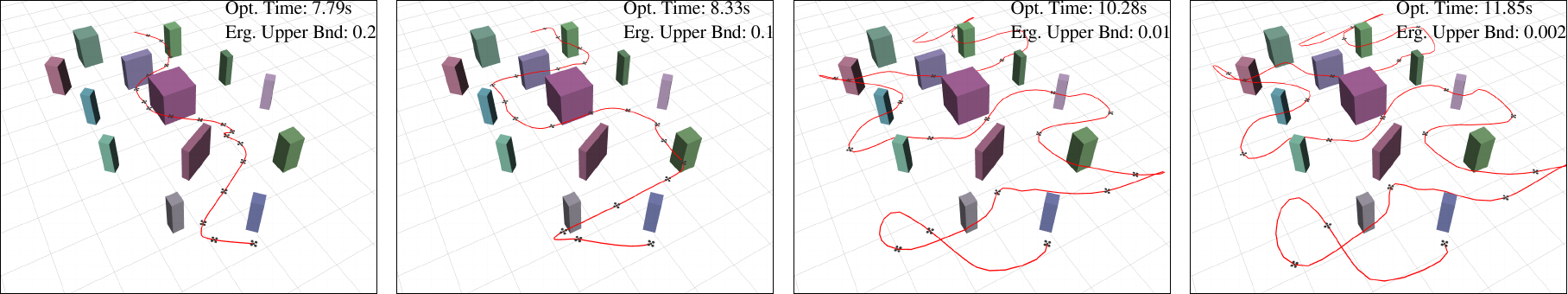}
        \caption{\textbf{Time Optimal Ergodic Trajectory Evolution in Cluttered Environment.} Here, we illustrate the evolution of ergodic trajectories for uniformly exploring the cluttered environment with varying upper bound $\gamma$ on ergodicity. From left to right the minimum required ergodicity, defined by the upper bound $\gamma$ on the optimization problem~\eqref{eq:time-opt-erg}, generates more coverage over the space as $\gamma \to 0$. As the trajectories are required to be more ergodic and cover more of the search area, the trajectory time is automatically increased.  }
        \label{fig:sim_cluttered_search}
    \end{figure*}

    \begin{figure*}[h]
        \centering
        \includegraphics[width=\linewidth]{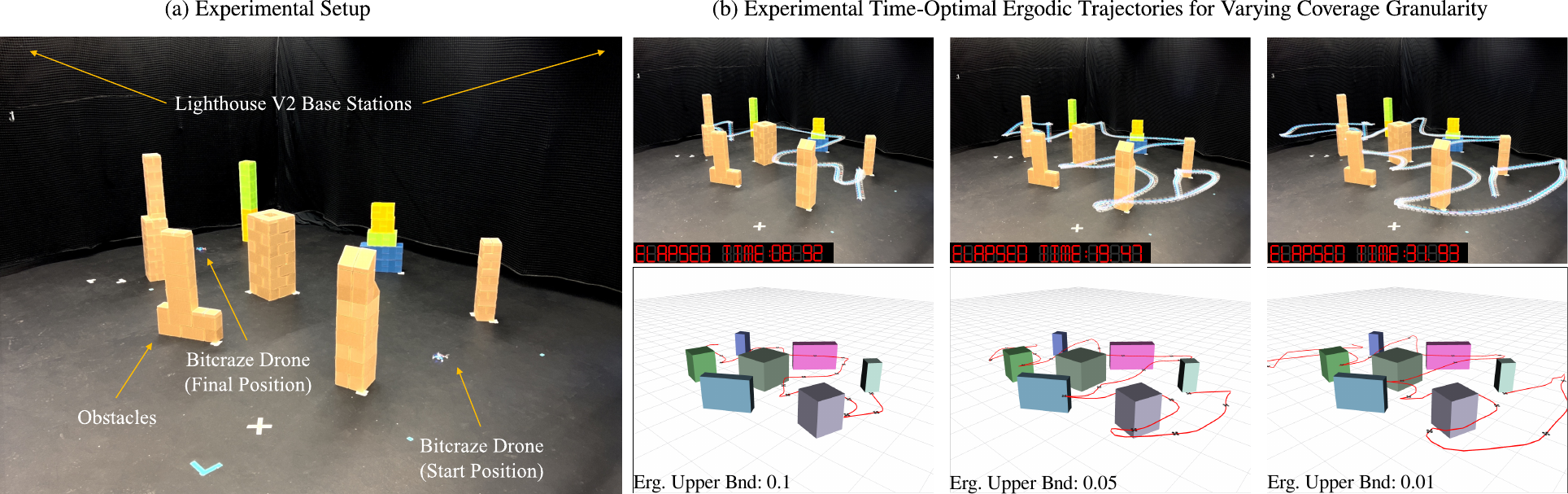}
        \caption{\textbf{Experimental Time-Optimal Ergodic Search Results.} Illustrated are three experimental runs of a drone uniformly exploring an area with obstacles in optimal time. (a) Our experimental setup consists of two Lighthouse position trackers for the drone, a Crazyflie drone, and several obstacles with known positions during trajectory optimization time. (b) Trajectory solutions are found using a uniform distribution $\phi$ and safety-based collision avoidance constraints~\cite{ames2019control}. (Bottom) Solved time-optimal trajectories for varying $\gamma$. (Top) Executed trajectories on the drone with the resulting elapsed time. All drone execution times are within $1.5s$ of the optimized time.  Please see supplementary multimedia for video demonstrations. }
        \label{fig:exp_results}
    \end{figure*}
    
        Last, we studied the dependence of solutions as a function of the initial trajectory that was provided to the solver. 
        The initial trajectory is varied based on common choices (e.g., randomly added noise and sinusoidal paths). 
        Illustrated in Fig.~\ref{fig:erg_init_ablation} is the resulting optimized time-optimal ergodic trajectory subject to the initial trajectory condition. 
        We find that the more regular and smooth the initial trajectory is, the more well-behaved and consistent the optimized solution. 
        Non-smooth initial trajectories provided high variability in the solution. We believe this is caused by the non-linearity of the ergodic metric and that there exist infinitely many trajectory solutions that satisfy the same ergodicity (all solutions maintain an ergodicity of $\mathcal{E}=0.01$) as shown previously in~\cite{miller2013trajectory}.

    \subsection{Time-Optimal Ergodic Search in a Cluttered Environment} \label{sec:results:clutter}

        In this subsection, we investigate more realistic settings for which to use the proposed time-optimal ergodic search. 
        Specifically, we consider the case of time-optimal exploration in a cluttered environment where the goal is for a robot to navigate around obstacles in the environment and cover the whole area. 
        We first demonstrate the results in simulation and show that it is possible to add in safety-based collision constraints~\cite{agrawal2017discrete} without impeding the coverage performance. 
        Then we execute the time-optimal trajectories on a drone.

        \vspace{1.5mm}
        \noindent
        \textbf{A5: Integrating Safety-Based Collision Constraints.} To successfully navigate and explore in a cluttered environment in optimal time, safety-based constraints are required. 
        We introduce safety here through control-barrier functions (CBFs)~\cite{agrawal2017discrete,ames2019control}. 
        CBFs provide an inequality constraint that, when satisfied, guarantees state trajectories remain within a predefined safe set of states. For more information, please see Appendix~\ref{appendix:implementation}.
        The constraints are integrated such that each CBF is centered around an object scattered in the environment (see~\cite{lerch2022safety}).  
        In this example, we assume that we know the location of each obstacle and the goal is to uniformly explore the cluttered area. 
        We use a constrained 2-D single integrator system (kinematic system) as it closely matches the Crazyflie 2.0 drone movements which have limits on how fast they can fly. 
        The CBF constraints are integrated through $h_1$ found in~\eqref{eq:direct_time-opt-erg} where more details can be found in Appendix~\ref{appendix:implementation}.
        
        Results for time-optimal trajectories are illustrated in Fig.~\ref{fig:sim_cluttered_search} for $\gamma = 0.2, 0.1, 0.01, 0.002$. 
        As the value of $\gamma$ decreases, the optimized time is increased.
        This can be seen in Fig.~\ref{fig:sim_cluttered_search} where the trajectory becomes more ergodic and explores each area between the obstacles (taking more time to search carefully as $\gamma$ decreses). 
        The CBF constraints prevent the trajectory from getting too close to any obstacle while allowing the solver to reach the required value of ergodicity. 

        Additionally, we evaluate the proposed method on a nonlinear aircraft dynamics model in a cluttered environment (see Fig.~\ref{fig:3d_exploration} and Appendix~\ref{appendix:implementation} for more detail).
        In this example, the coverage problem is defined in $\mathcal{W} \subset \mathbb{R}^3$ with uniformly random ellipsoids distributed over the search space.
        We find the proposed solver is capable of providing uniform coverage trajectories subject to the nonlinear dynamics constraints. 
        Note that with the dimensionality increasing, so will the computational complexity as described in prior work~\cite{abraham2021ergodic, abraham2018decentralized, shetty2021ergodic}.
        
        \vspace{1.5mm}
        \noindent
        \textbf{A6: Time-Opt. Ergodic Search in Clutter.} Using the Crazyflie 2.0 drone, we demonstrate the ability of a drone to execute time-optimal ergodic search trajectories in a cluttered environment.
        As shown in Fig.~\ref{fig:exp_results} (a), the experimental setup is created so we can track the position of the drone using two Lighthouse trackers. 
        The drone is tasked to explore the environment uniformly, starting at the initial position and ending at the final position in optimal time. 
        As in simulations, the obstacle position and shape are assumed known and a respective CBF safety constraint is used for each obstacle in the environment. 
        Velocity constraints are imposed through the problem constraints specified in~\eqref{eq:direct_time-opt-erg}.

        We test the drone's ability to execute the time-optimal ergodic trajectories in three levels: 1) fast search with $\gamma=0.1$, 2) balanced search with $\gamma=0.05$, and 3) long search with $\gamma=0.01$. 
        The rendered trajectories (bottom) closely match the drone's trajectory (top) in Fig.~\ref{fig:exp_results} (b). 
        The elapsed time is shown on the top figures. 
        Note that the drone execution time and the optimized simulation time are within $1.5s$ which demonstrates good tracking and that the constraints are approximating the drone's behavior closely. 
        Future work will consider real-time control implementation of time-optimal ergodic search where obstacles in the environment are unknown.

\section{Conclusion} 
\label{sec:conclusion}

    In conclusion, we demonstrated a novel time-optimal ergodic trajectory method for synthesizing time-optimal autonomous search and exploration trajectories. 
    We posed the problem of time-optimal ergodic search from the perspective of time-optimal control. 
    Analytical conditions of optimality were proven through a Bolza formulation of the problem and using Pontryagin's maximum principle. 
    A solution based on direct numerical optimization was presented and analyzed for producing time-optimal ergodic trajectories. 
    We show that it is possible to balance time against the granularity of search in several different scenarios that include search in a cluttered environment. 
    The proposed optimization was shown to handle additional constraints without loss of coverage performance. 
    Last, we demonstrated minimum-time search and exploration trajectories in a cluttered environment on a physical drone.

    Future work will consider real-time optimization routines for online planning and control. 
    A limitation of the proposed work is that the solutions are not globally optimal, but  locally optimal solutions. 
    This is due to the ergodic metric being highly nonlinear and non-convex. 
    Interestingly, many trajectory solutions can satisfy the same ergodic metric yielding the same value. 
    Future work will explore the conditions of optimality and the class of trajectories that are considered equivalently optimal. 
    Furthermore, future directions will include environmental uncertainty that has the potential to be integrated into the search and exploration approach.


\bibliography{references}

\begin{thebibliography}{10}
\providecommand{\url}[1]{#1}
\csname url@samestyle\endcsname
\providecommand{\newblock}{\relax}
\providecommand{\bibinfo}[2]{#2}
\providecommand{\BIBentrySTDinterwordspacing}{\spaceskip=0pt\relax}
\providecommand{\BIBentryALTinterwordstretchfactor}{4}
\providecommand{\BIBentryALTinterwordspacing}{\spaceskip=\fontdimen2\font plus
\BIBentryALTinterwordstretchfactor\fontdimen3\font minus
  \fontdimen4\font\relax}
\providecommand{\BIBforeignlanguage}[2]{{%
\expandafter\ifx\csname l@#1\endcsname\relax
\typeout{** WARNING: IEEEtran.bst: No hyphenation pattern has been}%
\typeout{** loaded for the language `#1'. Using the pattern for}%
\typeout{** the default language instead.}%
\else
\language=\csname l@#1\endcsname
\fi
#2}}
\providecommand{\BIBdecl}{\relax}
\BIBdecl

\bibitem{adams2007search}
A.~L. Adams, T.~A. Schmidt, C.~D. Newgard, C.~S. Federiuk, M.~Christie,
  S.~Scorvo, and M.~DeFreest, ``Search is a time-critical event: when search
  and rescue missions may become futile,'' \emph{Wilderness \& Environmental
  Medicine}, vol.~18, no.~2, pp. 95--101, 2007.

\bibitem{mayer2019drones}
S.~Mayer, L.~Lischke, and P.~W. Wo{\'z}niak, ``Drones for search and rescue,''
  in \emph{1st International Workshop on Human-Drone Interaction}, 2019.

\bibitem{prabhakar2020ergodic}
A.~Prabhakar, I.~Abraham, A.~Taylor, M.~Schlafly, K.~Popovic, G.~Diniz,
  B.~Teich, B.~Simidchieva, S.~Clark, and T.~Murphey, ``{Ergodic Specifications
  for Flexible Swarm Control: From User Commands to Persistent Adaptation},''
  July 2020.

\bibitem{Chen-RSS-22}
W.~Chen, R.~Khardon, and L.~Liu, ``{AK: Attentive Kernel for Information
  Gathering},'' in \emph{Proceedings of Robotics: Science and Systems}, New
  York City, NY, USA, June 2022.

\bibitem{tranzatto2022cerberus}
M.~Tranzatto, T.~Miki, M.~Dharmadhikari, L.~Bernreiter, M.~Kulkarni,
  F.~Mascarich, O.~Andersson, S.~Khattak, M.~Hutter, R.~Siegwart \emph{et~al.},
  ``Cerberus in the darpa subterranean challenge,'' \emph{Science Robotics},
  vol.~7, no.~66, p. eabp9742, 2022.

\bibitem{galceran2013survey}
E.~Galceran and M.~Carreras, ``A survey on coverage path planning for
  robotics,'' \emph{Robotics and Autonomous systems}, vol.~61, no.~12, pp.
  1258--1276, 2013.

\bibitem{dai2018quality}
R.~Dai, S.~Fotedar, M.~Radmanesh, and M.~Kumar, ``Quality-aware uav coverage
  and path planning in geometrically complex environments,'' \emph{Ad Hoc
  Networks}, vol.~73, pp. 95--105, 2018.

\bibitem{zelinsky1993planning}
A.~Zelinsky, R.~A. Jarvis, J.~Byrne, S.~Yuta \emph{et~al.}, ``Planning paths of
  complete coverage of an unstructured environment by a mobile robot,'' in
  \emph{Proceedings of international conference on advanced robotics}, vol.~13,
  1993, pp. 533--538.

\bibitem{fazli2010complete}
P.~Fazli, A.~Davoodi, P.~Pasquier, and A.~K. Mackworth, ``Complete and robust
  cooperative robot area coverage with limited range,'' in \emph{2010 IEEE/RSJ
  International Conference on Intelligent Robots and Systems}.\hskip 1em plus
  0.5em minus 0.4em\relax IEEE, 2010, pp. 5577--5582.

\bibitem{siligardi2019robust}
L.~Siligardi, J.~Panerati, M.~Kaufmann, M.~Minelli, C.~Ghedini, G.~Beltrame,
  and L.~Sabattini, ``Robust area coverage with connectivity maintenance,'' in
  \emph{2019 International Conference on Robotics and Automation (ICRA)}.\hskip
  1em plus 0.5em minus 0.4em\relax IEEE, 2019, pp. 2202--2208.

\bibitem{pratissoli2022coverage}
F.~Pratissoli, B.~Capelli, and L.~Sabattini, ``On coverage control for limited
  range multi-robot systems,'' in \emph{2022 IEEE/RSJ International Conference
  on Intelligent Robots and Systems (IROS)}.\hskip 1em plus 0.5em minus
  0.4em\relax IEEE, 2022, pp. 9957--9963.

\bibitem{breitenmoser2010voronoi}
A.~Breitenmoser, M.~Schwager, J.-C. Metzger, R.~Siegwart, and D.~Rus, ``Voronoi
  coverage of non-convex environments with a group of networked robots,'' in
  \emph{2010 IEEE international conference on robotics and automation}.\hskip
  1em plus 0.5em minus 0.4em\relax IEEE, 2010, pp. 4982--4989.

\bibitem{9982287}
A.~Bouman, J.~Ott, S.-K. Kim, K.~Chen, M.~J. Kochenderfer, B.~Lopez, A.-a.
  Agha-mohammadi, and J.~Burdick, ``Adaptive coverage path planning for
  efficient exploration of unknown environments,'' in \emph{2022 IEEE/RSJ
  International Conference on Intelligent Robots and Systems (IROS)}, 2022, pp.
  11\,916--11\,923.

\bibitem{nenchev2013towards}
V.~Nenchev and J.~Raisch, ``Towards time-optimal exploration and control by an
  autonomous robot,'' in \emph{21st Mediterranean Conference on Control and
  Automation}.\hskip 1em plus 0.5em minus 0.4em\relax IEEE, 2013, pp.
  1236--1241.

\bibitem{klesh2008real}
A.~Klesh, A.~Girard, and P.~Kabamba, ``Real-time path planning for time-optimal
  exploration,'' in \emph{AIAA Guidance, Navigation and Control Conference and
  Exhibit}, 2008, p. 6982.

\bibitem{mathew2011metrics}
G.~Mathew and I.~Mezi{\'c}, ``Metrics for ergodicity and design of ergodic
  dynamics for multi-agent systems,'' \emph{Physica D: Nonlinear Phenomena},
  vol. 240, no. 4-5, pp. 432--442, 2011.

\bibitem{miller2013trajectory}
L.~M. Miller and T.~D. Murphey, ``Trajectory optimization for continuous
  ergodic exploration,'' in \emph{2013 American Control Conference}.\hskip 1em
  plus 0.5em minus 0.4em\relax IEEE, 2013, pp. 4196--4201.

\bibitem{coffin2022multi}
H.~Coffin, I.~Abraham, G.~Sartoretti, T.~Dillstrom, and H.~Choset,
  ``Multi-agent dynamic ergodic search with low-information sensors,'' in
  \emph{2022 International Conference on Robotics and Automation (ICRA)}.\hskip
  1em plus 0.5em minus 0.4em\relax IEEE, 2022, pp. 11\,480--11\,486.

\bibitem{abraham2021ergodic}
I.~Abraham, A.~Prabhakar, and T.~D. Murphey, ``An ergodic measure for active
  learning from equilibrium,'' \emph{IEEE Transactions on Automation Science
  and Engineering}, vol.~18, no.~3, pp. 917--931, 2021.

\bibitem{lerch2022safety}
C.~Lerch, D.~Dong, and I.~Abraham, ``Safety-critical ergodic exploration in
  cluttered environments via control barrier functions,'' in
  \emph{International Conference on Robotics and Automation (ICRA)}, 2023.

\bibitem{abraham2018decentralized}
I.~Abraham and T.~D. Murphey, ``Decentralized ergodic control:
  distribution-driven sensing and exploration for multiagent systems,''
  \emph{IEEE Robotics and Automation Letters}, vol.~3, no.~4, pp. 2987--2994,
  2018.

\bibitem{miller2015ergodic}
L.~M. Miller, Y.~Silverman, M.~A. MacIver, and T.~D. Murphey, ``Ergodic
  exploration of distributed information,'' \emph{IEEE Transactions on
  Robotics}, vol.~32, no.~1, pp. 36--52, 2015.

\bibitem{scott2009capturing}
S.~E. Scott, T.~C. Redd, L.~Kuznetsov, I.~Mezi{\'c}, and C.~K. Jones,
  ``Capturing deviation from ergodicity at different scales,'' \emph{Physica D:
  Nonlinear Phenomena}, vol. 238, no.~16, pp. 1668--1679, 2009.

\bibitem{patel2021multi}
S.~Patel, S.~Hariharan, P.~Dhulipala, M.~C. Lin, D.~Manocha, H.~Xu, and
  M.~Otte, ``Multi-agent ergodic coverage in urban environments,'' in
  \emph{2021 IEEE International Conference on Robotics and Automation
  (ICRA)}.\hskip 1em plus 0.5em minus 0.4em\relax IEEE, 2021, pp. 8764--8771.

\bibitem{Abraham-RSS-18}
I.~Abraham, A.~Mavrommati, and T.~Murphey, ``Data-driven measurement models for
  active localization in sparse environments,'' in \emph{Proceedings of
  Robotics: Science and Systems}, Pittsburgh, Pennsylvania, June 2018.

\bibitem{kopp1962pontryagin}
R.~E. Kopp, ``Pontryagin maximum principle,'' in \emph{Mathematics in Science
  and Engineering}.\hskip 1em plus 0.5em minus 0.4em\relax Elsevier, 1962,
  vol.~5, pp. 255--279.

\bibitem{foehn2021time}
P.~Foehn, A.~Romero, and D.~Scaramuzza, ``Time-optimal planning for quadrotor
  waypoint flight,'' \emph{Science Robotics}, vol.~6, no.~56, p. eabh1221,
  2021.

\bibitem{dal2019comparison}
N.~Dal~Bianco, E.~Bertolazzi, F.~Biral, and M.~Massaro, ``Comparison of direct
  and indirect methods for minimum lap time optimal control problems,''
  \emph{Vehicle System Dynamics}, vol.~57, no.~5, pp. 665--696, 2019.

\bibitem{ames2019control}
A.~D. Ames, S.~Coogan, M.~Egerstedt, G.~Notomista, K.~Sreenath, and P.~Tabuada,
  ``Control barrier functions: Theory and applications,'' in \emph{2019 18th
  European control conference (ECC)}.\hskip 1em plus 0.5em minus 0.4em\relax
  IEEE, 2019, pp. 3420--3431.

\bibitem{agrawal2017discrete}
A.~Agrawal and K.~Sreenath, ``Discrete control barrier functions for
  safety-critical control of discrete systems with application to bipedal robot
  navigation.'' in \emph{Robotics: Science and Systems}, vol.~13.\hskip 1em
  plus 0.5em minus 0.4em\relax Cambridge, MA, USA, 2017.

\bibitem{choset2001coverage}
H.~Choset, ``Coverage for robotics--a survey of recent results,'' \emph{Annals
  of mathematics and artificial intelligence}, vol.~31, pp. 113--126, 2001.

\bibitem{araujo2013multiple}
J.~Araujo, P.~Sujit, and J.~B. Sousa, ``Multiple uav area decomposition and
  coverage,'' in \emph{2013 IEEE symposium on computational intelligence for
  security and defense applications (CISDA)}.\hskip 1em plus 0.5em minus
  0.4em\relax IEEE, 2013, pp. 30--37.

\bibitem{bahnemann2021revisiting}
R.~B{\"a}hnemann, N.~Lawrance, J.~J. Chung, M.~Pantic, R.~Siegwart, and
  J.~Nieto, ``Revisiting boustrophedon coverage path planning as a generalized
  traveling salesman problem,'' in \emph{Field and Service Robotics: Results of
  the 12th International Conference}.\hskip 1em plus 0.5em minus 0.4em\relax
  Springer, 2021, pp. 277--290.

\bibitem{cabreira2019survey}
T.~M. Cabreira, L.~B. Brisolara, and F.~J. Paulo~R, ``Survey on coverage path
  planning with unmanned aerial vehicles,'' \emph{Drones}, vol.~3, no.~1, p.~4,
  2019.

\bibitem{ulrich1997autonomous}
I.~Ulrich, F.~Mondada, and J.-D. Nicoud, ``Autonomous vacuum cleaner,''
  \emph{Robotics and autonomous systems}, vol.~19, no. 3-4, pp. 233--245, 1997.

\bibitem{applegate2011traveling}
D.~L. Applegate, R.~E. Bixby, V.~Chv{\'a}tal, and W.~J. Cook, ``The traveling
  salesman problem,'' in \emph{The Traveling Salesman Problem}.\hskip 1em plus
  0.5em minus 0.4em\relax Princeton university press, 2011.

\bibitem{howard2002mobile}
A.~Howard, M.~J. Matari{\'c}, and G.~S. Sukhatme, ``Mobile sensor network
  deployment using potential fields: A distributed, scalable solution to the
  area coverage problem,'' in \emph{Distributed autonomous robotic systems
  5}.\hskip 1em plus 0.5em minus 0.4em\relax Springer, 2002, pp. 299--308.

\bibitem{kim2006local}
D.~H. Kim and S.~Shin, ``Local path planning using a new artificial potential
  function composition and its analytical design guidelines,'' \emph{Advanced
  Robotics}, vol.~20, no.~1, pp. 115--135, 2006.

\bibitem{paull2012sensor}
L.~Paull, S.~Saeedi, M.~Seto, and H.~Li, ``Sensor-driven online coverage
  planning for autonomous underwater vehicles,'' \emph{IEEE/ASME Transactions
  on Mechatronics}, vol.~18, no.~6, pp. 1827--1838, 2012.

\bibitem{li2020high}
P.~Li, C.-y. Yang, R.~Wang, and S.~Wang, ``A high-efficiency, information-based
  exploration path planning method for active simultaneous localization and
  mapping,'' \emph{International Journal of Advanced Robotic Systems}, vol.~17,
  no.~1, p. 1729881420903207, 2020.

\bibitem{silverman2013optimal}
Y.~Silverman, L.~M. Miller, M.~A. MacIver, and T.~D. Murphey, ``Optimal
  planning for information acquisition,'' in \emph{2013 IEEE/RSJ International
  Conference on Intelligent Robots and Systems}.\hskip 1em plus 0.5em minus
  0.4em\relax IEEE, 2013, pp. 5974--5980.

\bibitem{dressel_optimality_2018}
\BIBentryALTinterwordspacing
L.~Dressel and M.~J. Kochenderfer, ``On the optimality of ergodic trajectories
  for information gathering tasks,'' in \emph{2018 Annual American Control
  Conference ({ACC})}.\hskip 1em plus 0.5em minus 0.4em\relax {IEEE}, pp.
  1855--1861. [Online]. Available:
  \url{https://ieeexplore.ieee.org/document/8430857/}
\BIBentrySTDinterwordspacing

\bibitem{lasalle2016time}
J.~P. LaSalle \emph{et~al.}, ``The time optimal control problem,''
  \emph{Contributions to the theory of nonlinear oscillations}, vol.~5, pp.
  1--24, 2016.

\bibitem{romero2022time}
A.~Romero, R.~Penicka, and D.~Scaramuzza, ``Time-optimal online replanning for
  agile quadrotor flight,'' \emph{IEEE Robotics and Automation Letters},
  vol.~7, no.~3, pp. 7730--7737, 2022.

\bibitem{reynolds2001hybrid}
N.~Reynolds and P.~H. Meckl, ``Hybrid optimization scheme for time-optimal
  control,'' in \emph{Proceedings of the 2001 American Control Conference.(Cat.
  No. 01CH37148)}, vol.~5.\hskip 1em plus 0.5em minus 0.4em\relax IEEE, 2001,
  pp. 3421--3426.

\bibitem{chin1986optimum}
W.-P. Chin and S.~Ntafos, ``Optimum watchman routes,'' in \emph{Proceedings of
  the second annual symposium on Computational geometry}, 1986, pp. 24--33.

\bibitem{o1987art}
J.~O'rourke \emph{et~al.}, \emph{Art gallery theorems and algorithms}.\hskip
  1em plus 0.5em minus 0.4em\relax Oxford University Press Oxford, 1987,
  vol.~57.

\bibitem{tian2004effective}
L.~Tian and C.~Collins, ``An effective robot trajectory planning method using a
  genetic algorithm,'' \emph{Mechatronics}, vol.~14, no.~5, pp. 455--470, 2004.

\bibitem{wang2020robot}
W.~Wang, Q.~Tao, Y.~Cao, X.~Wang, and X.~Zhang, ``Robot time-optimal trajectory
  planning based on improved cuckoo search algorithm,'' \emph{IEEE access},
  vol.~8, pp. 86\,923--86\,933, 2020.

\bibitem{baghli2017optimization}
F.~Z. Baghli, Y.~Lakhal \emph{et~al.}, ``Optimization of arm manipulator
  trajectory planning in the presence of obstacles by ant colony algorithm,''
  \emph{Procedia Engineering}, vol. 181, pp. 560--567, 2017.

\bibitem{kim2015trajectory}
J.-J. Kim and J.-J. Lee, ``Trajectory optimization with particle swarm
  optimization for manipulator motion planning,'' \emph{IEEE transactions on
  industrial informatics}, vol.~11, no.~3, pp. 620--631, 2015.

\bibitem{du2022time}
Y.~Du and Y.~Chen, ``Time optimal trajectory planning algorithm for robotic
  manipulator based on locally chaotic particle swarm optimization,''
  \emph{Chinese Journal of Electronics}, vol.~31, no.~5, pp. 906--914, 2022.

\bibitem{tabak1971optimal}
D.~Tabak and B.~C. Kuo, \emph{Optimal control by mathematical
  programming}.\hskip 1em plus 0.5em minus 0.4em\relax SRL Publishing Company,
  1971.

\bibitem{kuhn1951nonlinear}
H.~Kuhn and A.~Tucker, ``Nonlinear programming in proceedings of 2nd berkeley
  symposium (pp. 481--492),'' \emph{Berkeley: University of California
  Press.[Google Scholar]}, 1951.

\bibitem{potra2000interior}
F.~A. Potra and S.~J. Wright, ``Interior-point methods,'' \emph{Journal of
  computational and applied mathematics}, vol. 124, no. 1-2, pp. 281--302,
  2000.

\bibitem{de2016ergodic}
G.~De~La~Torre, K.~Fla{\ss}kamp, A.~Prabhakar, and T.~D. Murphey, ``Ergodic
  exploration with stochastic sensor dynamics,'' in \emph{2016 American Control
  Conference (ACC)}.\hskip 1em plus 0.5em minus 0.4em\relax IEEE, 2016, pp.
  2971--2976.

\bibitem{posa2014direct}
M.~Posa, C.~Cantu, and R.~Tedrake, ``A direct method for trajectory
  optimization of rigid bodies through contact,'' \emph{The International
  Journal of Robotics Research}, vol.~33, no.~1, pp. 69--81, 2014.

\bibitem{shetty2021ergodic}
S.~Shetty, J.~Silv{\'e}rio, and S.~Calinon, ``Ergodic exploration using tensor
  train: Applications in insertion tasks,'' \emph{IEEE Transactions on
  Robotics}, vol.~38, no.~2, pp. 906--921, 2021.

\bibitem{mukkamala2017variants}
M.~C. Mukkamala and M.~Hein, ``Variants of rmsprop and adagrad with logarithmic
  regret bounds,'' in \emph{International conference on machine learning},
  2017, pp. 2545--2553.

\bibitem{duchi2011adaptive}
J.~Duchi, E.~Hazan, and Y.~Singer, ``Adaptive subgradient methods for online
  learning and stochastic optimization.'' \emph{Journal of machine learning
  research}, vol.~12, no.~7, 2011.

\bibitem{crazyswarm}
\BIBentryALTinterwordspacing
J.~A. Preiss*, W.~H\"onig*, G.~S. Sukhatme, and N.~Ayanian, ``Crazyswarm: {A}
  large nano-quadcopter swarm,'' in \emph{{IEEE} International Conference on
  Robotics and Automation ({ICRA})}.\hskip 1em plus 0.5em minus 0.4em\relax
  {IEEE}, 2017, pp. 3299--3304, software available at
  \url{https://github.com/USC-ACTLab/crazyswarm}. [Online]. Available:
  \url{https://doi.org/10.1109/ICRA.2017.7989376}
\BIBentrySTDinterwordspacing

\bibitem{doi:10.1126/scirobotics.abm6074}
\BIBentryALTinterwordspacing
S.~Macenski, T.~Foote, B.~Gerkey, C.~Lalancette, and W.~Woodall, ``Robot
  operating system 2: Design, architecture, and uses in the wild,''
  \emph{Science Robotics}, vol.~7, no.~66, p. eabm6074, 2022. [Online].
  Available: \url{https://www.science.org/doi/abs/10.1126/scirobotics.abm6074}
\BIBentrySTDinterwordspacing

\bibitem{zeng2021safety}
J.~Zeng, B.~Zhang, and K.~Sreenath, ``Safety-critical model predictive control
  with discrete-time control barrier function,'' in \emph{2021 American Control
  Conference (ACC)}.\hskip 1em plus 0.5em minus 0.4em\relax IEEE, 2021, pp.
  3882--3889.

\end{thebibliography}
\bibliographystyle{IEEEtran}

\begin{appendices} 
    \section{Proofs}
\label{appendix:proof}

\begin{proof} Theorem~\ref{thrm:1}:
    Using the Hamiltonian described in~\eqref{eq:ham}, we get the following time-optimal ergodic objective function
    \begin{align}\label{eq:obj_ham}
        \mathcal{J}(\bar{x},u,t_f, \rho, \lambda) &= \rho^\top\psi(\bar{x},t_f) \mid_{t_f} \nonumber \\ 
        &+ \int_{t_0}^{t_f} H(\bar{x},u,\lambda) - \lambda^\top \dot{\bar{x}} dt.
    \end{align}
    To find conditions of optimality over the continuous-time arguments of the objective function, we take the total variational derivative of~\eqref{eq:obj_ham}
    \begin{align}
        \delta \mathcal{J} &= \delta \rho^\top \psi(\bar{x},t_f)\mid_{t_f} + \bigg(\frac{\partial \psi}{\partial \bar{x}}^\top \rho \bigg)^\top \delta \bar{x} \bigg|_{t_f} \\ 
        & + \bigg(H-\lambda^\top \dot{\bar{x}} + \rho^\top \frac{\partial \psi}{\partial t_f}\bigg) \delta t_f \bigg|_{t_f} \nonumber \\
        &+ \int_{t_0}^{t_f} \frac{\partial H}{\partial \bar{x}}^\top \delta \bar{x} + \frac{\partial H}{\partial u}^\top \delta u - \lambda^\top \delta \dot{\bar{x}} + \bigg(\frac{\partial H}{\partial \lambda} - \dot{\bar{x}} \bigg)^\top \delta \lambda dt\nonumber
    \end{align}
    Using integration by parts, we can remove the terms $\lambda^\top \delta \dot{\bar{x}}$ and get the following total derivative 
    \begin{align}
        \delta \mathcal{J} &= \delta \rho^\top \psi(\bar{x},t_f)\mid_{t_f} 
        + \bigg(\frac{\partial \psi}{\partial \bar{x}}^\top \rho  - \lambda \bigg)^\top\delta \bar{x} \bigg|_{t_f} \\ 
        & + \bigg(H-\lambda^\top \dot{\bar{x}} + \rho^\top \frac{\partial \psi}{\partial t_f}\bigg) \delta t_f \bigg|_{t_f} \nonumber \\
        &+ \int_{t_0}^{t_f} \bigg(\frac{\partial H}{\partial \bar{x}}^\top  + \dot{\lambda} \bigg) \delta \bar{x} + \frac{\partial H}{\partial u}^\top \delta u + \bigg(\frac{\partial H}{\partial \lambda} - \dot{\bar{x}} \bigg)^\top \delta \lambda dt\nonumber.
    \end{align}
    Setting $\delta \mathcal{J} = 0$ we get the following system of equations 
    \begin{subequations}
    \begin{align}
        \psi(\bar{x}(t_f), t_f) = 0 \\ 
        \bar{x}(t_0) = \bar{x}_0 \\ 
        \dot{\bar{x}} = \frac{\partial H}{\partial \lambda}^\top \\ 
        \dot{\lambda} = - \frac{\partial H}{\partial \bar{x}}^\top \\ 
        \frac{\partial H}{\partial u} = 0 \\
        \lambda(t_f) =  \frac{\partial \psi}{\partial \bar{x}}^\top \rho \bigg|_{t_f} \\
        H(\bar{x}(t_f),u(t_f),\lambda(t_f), t_f) = -\rho^\top \frac{\partial \psi }{\partial t_f} \bigg|_{t_f}
    \end{align}
    \end{subequations}
    Recognizing the input stationarity condition $\frac{\partial H}{\partial u}$, we can reformulate the expression as an optimization over control constraints
    \begin{equation}
            u^\star = \argmin_{u \in \mathcal{U}} H(\bar{x}^\star, u, \lambda^\star).        
    \end{equation}
    Thus, we convert the time-optimal ergodic control problem into a point-wise optimization given locally optimal solutions to $\bar{x}^\star$ and $\lambda^\star$ where $\star$ denotes local optimality. 
    Therefore, the conditions of optimality for the time-optimal ergodic control problem are
    \begin{align*}
        \psi(\bar{x}(t_f), t_f) = 0 \\ 
        \bar{x}(t_0) = \bar{x}_0 \\ 
        \dot{\bar{x}} = \frac{\partial H}{\partial \lambda}^\top \\ 
        \dot{\lambda} = - \frac{\partial H}{\partial \bar{x}}^\top \\ 
        u^\star = \argmin_{u \in \mathcal{U}} H(\bar{x}^\star, u, \lambda^\star) \\       
        \lambda(t_f) =  \frac{\partial \psi}{\partial \bar{x}}^\top \rho \bigg|_{t_f} \\
        H(\bar{x}(t_f),u(t_f),\lambda(t_f), t_f) = -\rho^\top \frac{\partial \psi }{\partial t_f} \bigg|_{t_f}.
    \end{align*}
\end{proof}

\section{Implementation Details}
\label{appendix:implementation}

This appendix provides additional detailed information regarding parameters, initialization, environment configuration, and explicit equations used in the simulated and experimental results presented in Section~\ref{sec:results}.

\subsection{Simulated Results}

    All simulated results were done using 2-D point mass dynamics and a maximum number of basis $k_\text{max}=8$ e.g., $|\mathcal{K}^2|=64$.
    Note that the chosen dynamics is not specific to our implementation, as the proposed approach can handle arbitrary dynamics, but for ease of analysis. 

    \vspace{1.5mm}
    \noindent
    \textbf{A1}: 
    For results presented in Section~\ref{sec:results} A1, a bounded 2D exploration space $\expSpace = [0,1]\times [0,1]$ was used. 
    Here, the function $g:\mathcal{X} \to \expSpace$ is defined as $g(x) = \mathbf{I}_p x$ where $\mathbf{I}_p$ is an selection matrix that pulls the position terms from the state $x$.
    Initial conditions for the problem are given as $x_0 = [0.1, 0.1, 0,0]^\top$, $x_f=[0.9,0.9,0,0]^\top$. 
    The information distribution used is $\phi(w)=1$ which defines a uniform distribution. 
    Variations in $\gamma$ range from $0.005\to0.1$ spread evenly across $6$ values. 
    Control is constrained using $|u| \le u_\text{max}$ where $u_\text{max} = 1$ for both time-optimal and fixed-time ergodic problems.

    \vspace{1.5mm}
    \noindent
    \textbf{A2}: The bounded workspace for results presented in Section~\ref{sec:results} A2 are given as $\expSpace = [0,3.5]\times [-1,3.5]$. The function $g(x)$ then maps state to position in $\expSpace$. The solver is given as initial condition $x_0 = [1.5,-0.8,0,0]^\top$, $x_f = [2.0, 3.2, 0, 0]^\top$ and initial $t_f=10s$. 
    Here, we find that $N=100$ provided consistent trajectories. 
    Trajectories are found using $\gamma=0.1$ and $\gamma=0.001$ respectively. 
    Due to the larger space, the control constraint was set to be $u_\text{max}=2$.
    The distribution $\phi$ was specified as a mixture of Gaussian's 
    \begin{equation}
        \phi(w) = \sum_{i=0}^3 e^{- 10.5 \Vert w - c_i \Vert^2_2}
    \end{equation}
    where $c_i$ are Gaussian centers at $c_0= (1,-0.5),c_1=(2.5,0),c_2=(1.2, 2),c_3=(2.5,3)$.

    \vspace{1.5mm}
    \noindent
    \textbf{A3,4}: Ablation studies are performed under the same conditions as in A1. The initial parameters $t_f$ and $N$ are varied from $4-8s$ and $50-600$ with a resolution of $100$ after $50$. 
    The information distribution is given as $\phi(w)=1$ and we fix the ergodic upper bound $\gamma=0.05$.
    Control constraints are set as $u_\text{max}=1$ and the 2D point-mass dynamics are used.

\subsection{Solver Details}

    The solver used in this work is a variation of an augmented Lagrange solver~\cite{potra2000interior, kuhn1951nonlinear}. 
    The main augmentation is that we perform sub-gradient step using \cite{mukkamala2017variants, duchi2011adaptive}. 
    In practice, we found that this method did well in providing trajectory solutions that avoided saddle or inflection points. 

    \vspace{1.5mm}
    \noindent
    \textbf{Solver Parameters}:
        Initial parameters for the optimization problem in \eqref{eq:direct_time-opt-erg} varied depending on the ergodic upper bound $\gamma$. We found the following set of parameters would consistently yield solutions that were within solver tolerances. We used 200 knot points for $N$ and an $\alpha$ value for the CBF of 0.1. 
        Based on trial-and-error, we found that a linear interpolation from the initial to the final state as initial trajectory provided the most consistent solver performance. 
        Other variations may work (as shown in Fig.~\ref{fig:erg_init_ablation}), but due to the nonlinear landscape of the ergodic metric, it is difficult to predict the resulting solution. 
        For the three experimental results presented in Section~\ref{fig:exp_results}, we used an initial time of 8s, 12s, and 30s and ergodic upper bound $\gamma$ of 0.1, 0.05, and 0.01 respectively. 

\subsection{Results in Cluttered Environment}

    In this section, we provide the implementation details for the time-optimal ergodic search in a cluttered environment results in Section~\ref{sec:results:clutter}. 

    \vspace{1.5mm}
    \noindent
    \textbf{Aircraft Dynamics}: The exploration space $\expSpace$ is defined as a 3D space $\expSpace \subset \mathbb{R}^3$. Coverage trajectories are optimized over a uniform distribution $\phi(w)=1$ and all obstacles are known at optimization time. 
    The dynamics of the aircraft are given as 
    \begin{equation}
        \frac{d}{dt}\begin{bmatrix} x \\ y \\ z \\ \psi \\ \phi \\ v \end{bmatrix} = 
        \begin{bmatrix}
            v \cos \phi \cos \psi \\ 
            v \cos \phi \sin \psi \\ 
            v \sin \phi  \\ 
            u_1 \\ 
            u_2 \\ 
            u_3
        \end{bmatrix}
    \end{equation}
    where $x,y,z \in \expSpace$, and $u=[u_1, u_2,u_3]^\top$ are the control inputs with constraints $0.5\le u_1 \le 5$, and $|u_1|,|u_2| \le \frac{\pi}{3}$.
    
    \vspace{1.5mm}
    \noindent
    \textbf{System Configuration}:
    The exploration space $\expSpace$ is defined to be a 2D space with $\expSpace = [0,3.5]\times [-1,3.5]$. 
    All results used a uniform distribution $\phi(w)=1$. 
    Obstacles were placed in the environment at random and are fully known during planning time. 
    The dynamics used for planning are the 2D single-integrator system which we found mimic the drone control system reasonably. 
    The drone used was the Bitcraze Crazyflie 2.1 drone using the Lighthouse deck tracking system~\cite{crazyswarm} to obtain state position data. 
    The drone has an internal state estimator with inertial measurement unit that tracks velocity, orientation, and altitude. 
    Optimized trajectory are sent to the drone to track at the specified control frequency $\Delta t = t_f/N$ using ROS2 \cite{doi:10.1126/scirobotics.abm6074}.
    Experimental data was obtained through the Crazyflie library.

    \vspace{1.5mm}
    \noindent
    \textbf{Safety Barrier Constraints}:
        Safety and collision avoidance constraints for the obstacles in the environment were implemented using discrete control barrier functions~\cite{ames2019control, zeng2021safety, agrawal2017discrete}. 
        These functions define an inequality constraint that, when satisfied, guarantee a planned trajectories remain within a set of defined safe states. 
        The barrier constraint is defined as 
            \begin{align}
                &\Delta h_\text{CBF}(x_t, u_t) \geq - \alpha h_\text{CBF}(x_t)\\ 
                &\text{s.t. } 
                    \begin{cases}
                        0 < \alpha \leq 1 \\
                        \Delta h_\text{CBF}(x_t, u_t) =  h_\text{CBF}(f(x_t, u_t)) - h_\text{CBF}(x_t) \nonumber
                    \end{cases}
            \end{align}
        where $\alpha$ is a scalar value and $h_\text{CBF}(x)$ is given as the signed distance function to the centers of the rotated objects using a fourth order $L_4$ norm that smoothly approximates a square. 
        The constraint is implemented in~\eqref{eq:direct_time-opt-erg} for each knot discretization $N$.
    
\balance
\end{appendices}

\end{document}